\newtheorem{theorem}{Theorem}[section]
\newtheorem{corollary}{Corollary}[theorem]
\newtheorem{lemma}[theorem]{Lemma}
\begin{document}

\title{RSPO: Risk-Seeking Policy Optimization for Pass@k and Max@k Metrics in Large Language Models}
\author{%
  Kaichen Zhang$^{1,2}$ \quad Shenghao Gao$^{1,2}$ \quad Yuzhong Hong$^{2}$ \quad Haipeng Sun$^{2}$ \quad Junwei Bao$^{2}$ \\ Hongfei Jiang$^{2}$ \quad
 Yang Song$^{2}$ \quad Dingqian Hong$^{2}$ \quad Hui Xiong$^{1}$ \\
  $^{1}$Hong Kong University of Science and Technology (Guangzhou) \\
  $^{2}$Zuoyebang Education Technology \\
}
\renewcommand{\shortauthors}{Trovato et al.}

\begin{abstract}

Current large language model post-training optimizes a \textit{risk-neutral} objective that maximizes expected reward, yet evaluation relies heavily on \textit{risk-seeking} metrics like Pass@k (at least one success in $k$ trials) and Max@k (maximum reward across $k$ responses). This mismatch in risk preferences can inevitably lead to suboptimal performance. To bridge this gap, we propose \textbf{Risk-Seeking Policy Optimization (RSPO)}, a novel method that directly targets Pass@k and Max@k during training. A key challenge in optimizing these metrics is the "hitchhiking" problem: low-reward responses are inadvertently reinforced if they co-occur with a high-reward response within a sample of $k$ generations, resulting in inefficient optimization. RSPO addresses this problem by leveraging the closed-form probability that a given response is the maximum among $k$ samplings. Despite the complexity of nested gradients over multiple responses, RSPO produces efficient, unbiased gradient estimators for both metrics. We validate our approach with both rigorous theoretical analysis and comprehensive experimental results.

\end{abstract}

\begin{CCSXML}
<ccs2012>
<concept>
<concept_id>10010147.10010257.10010321</concept_id>
<concept_desc>Computing methodologies~Machine learning algorithms</concept_desc>
<concept_significance>500</concept_significance>
</concept>
<concept>
<concept_id>10010147.10010178.10010179</concept_id>
<concept_desc>Computing methodologies~Natural language processing</concept_desc>
<concept_significance>500</concept_significance>
</concept>
<concept>
<concept_id>10002951.10003227.10003351</concept_id>
<concept_desc>Information systems~Data mining</concept_desc>
<concept_significance>500</concept_significance>
</concept>
</ccs2012>
\end{CCSXML}

\ccsdesc[500]{Computing methodologies~Machine learning algorithms}
\ccsdesc[500]{Computing methodologies~Natural language processing}
\ccsdesc[500]{Information systems~Data mining}

\keywords{Large Language Model, Post Training, Pass@k, Max@k}
  \begin{teaserfigure}
  \end{teaserfigure}


\maketitle
\newpage
\begin{figure}[t]
\centering
\includegraphics[width=0.47\textwidth]{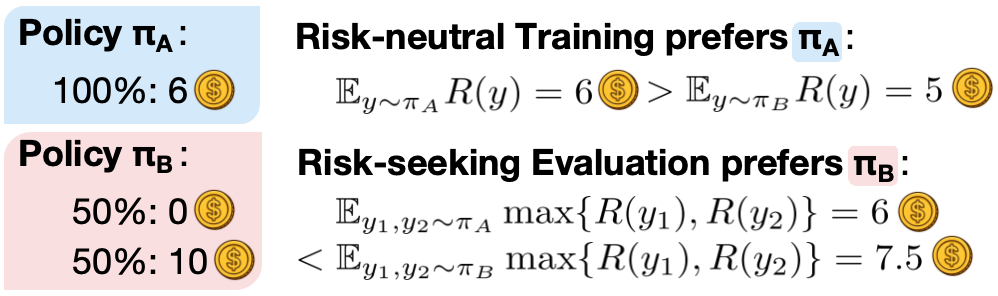}
\caption{The risk preference mismatch between training and evaluation leads to different policy preferences.}
\label{fig:intro}
\vspace{-3mm}
\end{figure}

\section{Introduction}
Large language models (LLMs) \cite{zhao2025surveylargelanguagemodels,minaee2025largelanguagemodelssurvey} have demonstrated remarkable capabilities in natural language understanding and generation, driven by architectures such as the Transformer \cite{vaswani2017attention} and scaled-up pre-training \cite{zhou2023comprehensivesurveypretrainedfoundation} on massive text corpora. Post-training \cite{tie2025surveyposttraininglargelanguage} techniques—such as supervised fine-tuning (SFT) \cite{ouyang2022training} and most notably reinforcement learning (RL) \cite{sutton1998reinforcement} have emerged as a powerful paradigm to tailor models toward specific tasks and human preferences. By defining a reward function that captures desired behaviors (e.g., helpfulness, correctness), reinforcement learning can steer LLM outputs toward higher-quality responses and align them more closely with end-user needs \cite{bai2022training}.

Despite these successes, a fundamental mismatch persists between the objectives used during RL training and the metrics by which LLMs are evaluated. Standard reinforcement learning maximizes the expected reward across all generations—an inherently \textit{risk-neutral} objective—whereas practical evaluation often hinges on \textit{risk-seeking} 
metrics such as Pass@k (the probability that at least one of $k$ samples is correct) or Max@k (the highest reward obtained among $k$ sampled responses). For example, \cite{stiennon2020learning} employs an inference-time strategy that selects the optimal output from a set of generated candidates, while \cite{yue2025does} uses Pass@k as a metric to assess the inherent reasoning capabilities of LLMs.

This divergence in risk preference can lead to suboptimal behavior: policies tuned to maximize average reward may under-explore high-variance, high-reward responses, resulting in degraded Max@k performance. Figure~\ref{fig:intro} demonstrates the risk preference mismatch, in which training prefers $\pi_A$ but evaluation prefers $\pi_B$.

A critical obstacle to bridging this gap is the so-called "hitchhiking" phenomenon. When optimizing Pass@k or Max@k, low-reward responses that co-occur with a single high-reward sample within a batch of $k$ can receive undue positive reinforcement, illustrated in Figure~\ref{fig:intro_2}. Such hitchhiking skews the gradient signal away from genuinely promising responses, leading to inefficient learning and slower convergence toward optimal policies.

\newpage

\begin{figure}[t]
\centering
\includegraphics[width=0.35\textwidth]{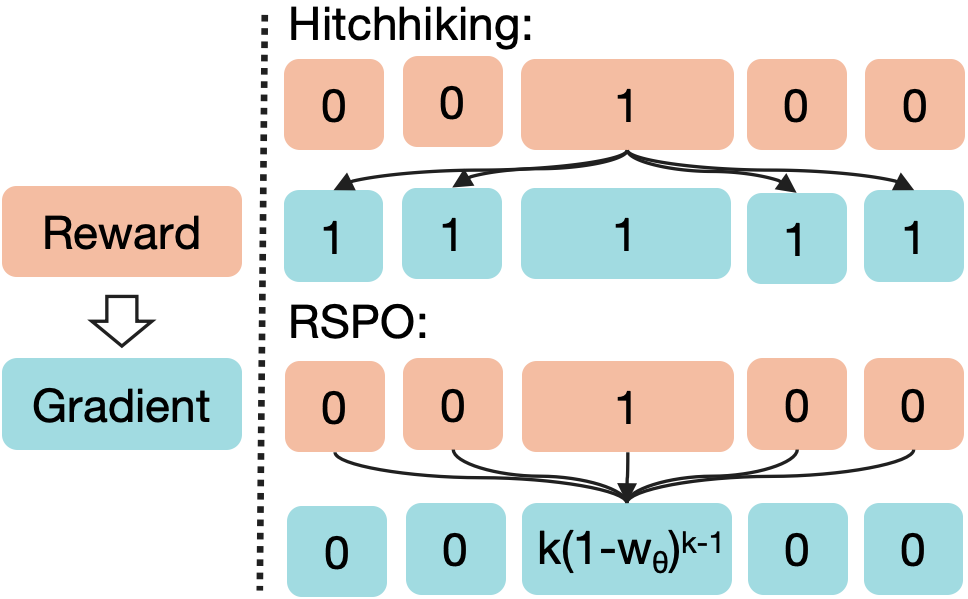}
\caption{An illustration comparing "hitchhiking" baseline and RSPO for computing gradient weights from rewards.}
\label{fig:intro_2}
\vspace{-5mm}
\end{figure}

To address these challenges, we propose Risk-Seeking Policy Optimization (RSPO), a novel RL algorithm that directly targets Pass@k and Max@k metrics. 

RSPO derives a closed-form expression for the probability that any given response has the maximum reward in a set of $k$ samples. This expression decouples the joint distribution of multiple responses in the Pass@k and Max@k objectives into a distribution over a single response, circumventing the "hitchhiking" issue.

Utilizing this expression, we derive a simplified objective for the Pass@k objective, wherein rewards are binary-valued:
$$
\mathop{\mathbb{E}}_{x \sim \mathcal{D}}\left[\sum_{y}\pi_\theta(y|x)\frac{1-(1-w_\theta)^k}{w_\theta}R(x,y)\right]
$$
In this objective, $w_\theta$ represents the probability that policy $\pi_\theta$ generates a reward of one. Although this formulation initially appears to involve nested gradients across multiple responses (due to the gradient term $\nabla_\theta(\pi_\theta(y|x)\frac{1-(1-w_\theta)^k}{w_\theta})$), we demonstrate that the gradient of the Pass@k objective can be elegantly simplified to:
$$
\mathop{\mathbb{E}}_{x \sim \mathcal{D},y \sim \pi_\theta(y|x)}\left[k(1-w_\theta)^{k-1}R(x,y) \nabla_\theta\log\pi_\theta(y|x)\right]
$$

The resulting gradient weight, $k(1-w_\theta)^{k-1}$, naturally decreases as $w_\theta$ increases for $k \geq 2$. This property significantly saves \textit{opportunity cost} during optimization. Specifically, if the policy consistently generates correct responses within $k$ attempts, further reinforcing correct answers becomes unnecessary. Instead, probabilities are more effectively allocated towards exploring alternative tokens, responses, or patterns that promise a higher global payoff.

For Max@k, we also derive a simplified objective without the "hitchhiking" issue. This simplified objective conveys the meaningful concept of \textit{marginal contribution}. Specifically, the gradient weight for a response \( y \) reflects the marginal gain in the maximum reward when \( y \) is added to sets of \( k-1 \) responses.

Based on prior derivations, we can construct unbiased estimators for the Pass@k and Max@k gradient weights. For example, $(1 - w_\theta)^{k-1}$ can be estimated using $\binom{n - c}{k - 1}/\binom{n - 1}{k - 1}$, where $n$ and $c$  denote the number of total and correct samples, respectively.

We conduct a comprehensive evaluation of RSPO on math reasoning. Our results demonstrate that RSPO consistently outperforms the baseline algorithm—which is hindered by the "hitchhiking" problem—across datasets and metrics. Notably, RSPO achieves optimal performance when the training hyperparameter $k$ is aligned with the evaluation metrics Pass@k and Max@k. Finally, we assess the scalability and robustness of RSPO, confirming its effectiveness under a wide range of conditions.

In summary, this paper makes three key contributions:
\begin{itemize}
    \item We identify the misalignment problem between standard risk‑neutral RL objective and risk‑seeking evaluation metrics, which can degrade LLM post‑training performance.

    \item We address the key "hitchhiking" challenge inherent to optimizing Pass@k and Max@k by proposing efficient, unbiased estimators for both metrics and providing rigorous theoretical analyses of their underlying mechanisms.
    
    \item We propose the RSPO algorithm, which, through extensive empirical evaluation, effectively maximizes Pass@k and Max@k metrics.
\end{itemize}

\section{Related Literature}
Post-training \cite{tie2025surveyposttraininglargelanguage} of large language models \cite{zhao2025surveylargelanguagemodels,minaee2025largelanguagemodelssurvey} refers to the refinement stage that follows initial pre-training \cite{zhou2023comprehensivesurveypretrainedfoundation}, aiming to adapt a generalized model to specific tasks or to align it with human-defined objectives \cite{bai2022training}. Supervised fine-tuning (SFT) \cite{ouyang2022training} is the most direct and widely adopted post-training method. It involves updating model parameters using labeled examples where the desired outputs for specific inputs are explicitly provided. Although straightforward and effective, SFT demands extensive human annotation, significantly limiting its scalability.

To address the limitations of SFT, reinforcement learning (RL) \cite{sutton1998reinforcement} methods have been increasingly utilized in post-training. RL-based post-training seeks to maximize the expected reward, formally represented as $\mathop{\mathbb{E}}_{x\sim\mathcal{D},y\sim\pi_\theta(y|x)}R(x,y)$. Among RL methods, policy gradient methods have emerged as the dominant framework. Policy gradient \cite{sutton1999policy} algorithms compute the gradient of the RL objective and optimize by sampling according to this gradient, resulting in the loss function $L=-R(x,y)\nabla_\theta\log\pi_\theta(y|x)$. Advanced algorithms such as Trust Region Policy Optimization (TRPO) \cite{pmlr-v37-schulman15} and Proximal Policy Optimization (PPO) \cite{schulman2017proximal} build upon this foundation by imposing constraints on policy updates, thereby ensuring stability and robustness during training. Most recently, Group Relative Policy Optimization (GRPO) \cite{shao2024deepseekmath, guo2025deepseek} has been proposed, wherein multiple responses are sampled, and standard scores of their rewards are employed as gradient weights, thereby eliminating the need for training an additional model for \textit{advantage} estimation.

Despite the accuracy metric (i.e., Pass@1), the evaluation of LLM using verifiable rewards \cite{wang2025survey} often employs the Pass@k metric. Pass@k, especially when applied to reasoning tasks, is frequently interpreted as reflecting the inherent capacity of an LLM \cite{yue2025does}. 
Notably, the unbiased estimation method for Pass@k introduced by Chen et al. \cite{chen2021evaluating} is widely recognized in the LLM literature, although this approach closely aligns with classical U-statistics \cite{hoeffding1992class}.

Beyond evaluation metrics, selecting the optimal output among n generated candidates—referred to as the best-of-n method \cite{stiennon2020learning}— serves as an inference-time strategy. Despite its effectiveness, best-of-n is computationally expensive. Consequently, prior research \cite{gui2024bonbon, sessa2024bond} has primarily focused on distilling a new policy that approximates the best-of-n distribution derived from a fixed, given policy. In contrast, this paper diverges from such approaches by actively searching through an extensive policy space to directly optimize Pass@k or Max@k, rather than approximating a static policy.

\newpage
\section{Preliminary} \label{sec:preliminary}

\textbf{Notations}. Large language models process a prompt $x$ as input and generate a response $y$ as output. The policy $\pi_{\theta}$, parameterized by $\theta$, defines the model's behavior: $\pi_{\theta}(y_t | x, y_{<t})$ represents the probability of generating the next token $y_t$ given the prompt $x$ and the previously generated tokens $y_{<t}$. Furthermore, $\pi_{\theta}(y | x)$ denotes the overall probability of generating the complete response $y$ given the prompt $x$. To evaluate response quality, a reward model $R(x, y)$ assigns a score to the pair $(x, y)$. This reward can be binary (e.g., indicating correctness) or continuous (e.g., reflecting learned human preferences).

For reading clarity, we introduce a \textit{total order} on the set of responses $y$ by by defining $y_1\leq y_2$ if and only if $R(x,y_1)\leq R(x,y_2)$. Besides, the strict ordering $y_1<y_2$ implies $R(x,y_1)<R(x,y_2)$.

\noindent
\textbf{Problem Formulation.} We formally define the Max@k objective for large language model post-training: given an initial policy $\pi_{\theta_{0}}$, a dataset of prompts $x \sim D$, a reward model $R$, the objective is to train a new policy $\pi_{\theta}$ that maximizes 
\begin{equation}
\mathop{\mathbb{E}}\limits_{x\sim\mathcal{D},y_1,y_2,...,y_k\sim\pi_\theta(y|x)}[\max\{R(x,y_1),R(x,y_2),...,R(x,y_k)\}] 
\end{equation}

Moreover, when rewards are binary ($R(x,y)\in\{0,1\}$), we fine the Pass@k objective as 
\vspace{-3mm}
\begin{equation}
\mathop{\mathbb{E}}\limits_{x\sim\mathcal{D},y_1,y_2,...,y_k\sim\pi_\theta(y|x)}[1-\prod_{i=1}^k(1-R(x,y_i)] 
\end{equation}

\noindent
\textbf{Baseline.} A straightforward approach to optimizing Max@k is to treat the group $y_1,y_2,...,y_k$ as a whole and apply policy gradient:
\begin{equation} \nonumber
\begin{aligned}
    &\nabla_\theta\mathop{\mathbb{E}}\limits_{x\sim\mathcal{D},y_1,y_2,...,y_k\sim\pi_\theta(y|x)}\max_{1\le i\le k} R(x,y_i) \\
    &=\nabla_\theta\sum_x P(x)\sum_{y_{1:k}}\pi_\theta(y_{1:k}|x) \max_{1\le i\le k} R(x,y_i) \\
    &=\sum_x P(x)\sum_{y_{1:k}}\pi_\theta(y_{1:k}|x)\max_{1\le i\le k} R(x,y_i) \nabla_\theta \log\pi_\theta(y_{1:k}|x) \\
    &=\sum_x P(x)\sum_{y_{1:k}}\pi_\theta(y_{1:k}|x) \max_{1\le i\le k} R(x,y_i)  \sum_{i=1}^k \nabla_\theta\log\pi_\theta(y_i|x) \\
    &\approx \frac{1}{|\mathcal{D}_b|}\sum_{x\sim \mathcal{D}_b}\frac{1}{m}\sum_{j=1}^m \max_{1\le i\le k} R(x,y_i^j)  \sum_{i=1}^k \nabla_\theta\log\pi_\theta(y_i^j|x)
\end{aligned}
\end{equation}

The last line employs a Monte Carlo simulation to sample $|\mathcal{D}_b|$ prompts and generate $m$ groups of responses for each prompt. 

Within each group, all $k$ responses share the same gradient weight, given by $\max_{1 \le i \le k} R(x, y_i^j)$—the maximum reward among the $k$ responses in that group. This approach introduces what we refer to as the "hitchhiking" problem: low-reward responses may be inadvertently reinforced simply by co-occurring with a high-reward response within the same group. As a result, this can lead to inefficient optimization and suboptimal learning dynamics.

\begin{figure}[t]
\centering
\includegraphics[width=0.47\textwidth]{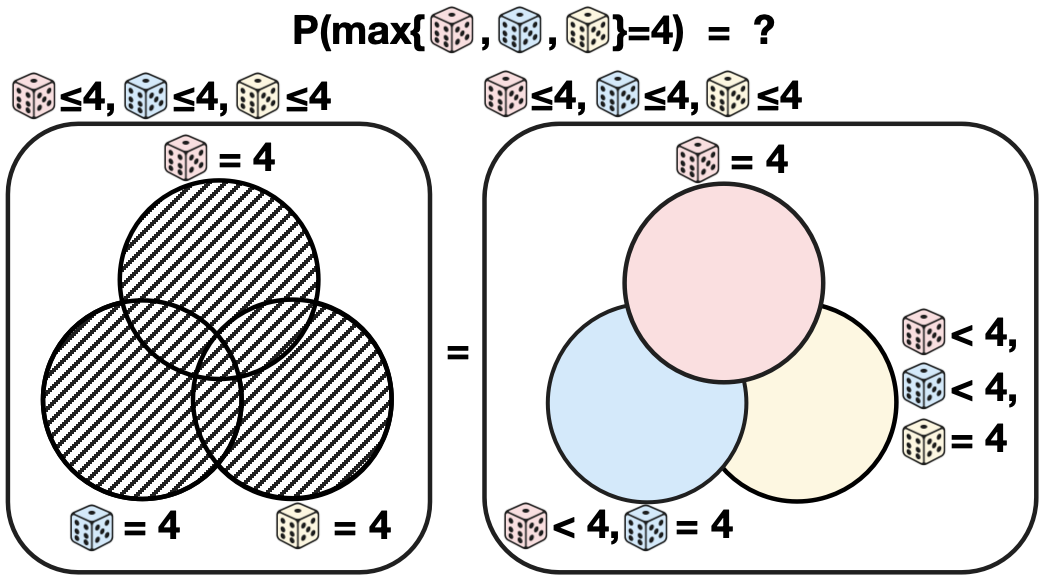}
\caption{An illustration of how to calculate the probability of achieving the maximum in a group: $P(\max\{D_1,D_2,D_3\}=4)=P(D_1=4,D_2\leq4,D_3\leq4)+P(D_1<4,D_2=4,D_3\leq4)+P(D_1<4,D_2<4,D_3=4)=\frac{1}{6}\frac{2}{3}\frac{2}{3}+\frac{1}{2}\frac{1}{6}\frac{2}{3}+\frac{1}{2}\frac{1}{2}\frac{1}{6}=\frac{37}{216}.$}
\label{fig:disjoint}
\end{figure}

\section{Risk Seeking Policy Optimization}
\subsection{Insight}
To address the "hitchhiking" problem, it is essential to decouple individual responses from the sets in which they appear. RSPO addresses this challenge by computing the probability that a given response yields the maximum reward within a set of size $k$:

\begin{theorem} \label{theorem:bestk}
Let $y_1, y_2, \dots, y_k$ be independent samples drawn from the distribution $\pi_\theta(\cdot \mid x)$.  Then the probability that a specific response $y$ is the one with the highest reward among $\{y_i\}_{i=1}^k$ is
\begin{equation}
    \begin{aligned}
        &P(y \in\{y_i\}_{i=1}^k,R(x,y)=\max_{1\le i\le k}R(x,y_i))\\
        =&\sum_{i=1}^k P_{<,\theta}(y)^{i-1}\cdot \pi_\theta(y|x) \cdot P_{\leq,\theta}(y)^{k-i}
    \end{aligned}
\end{equation}
where $P_{<,\theta}(y)=\sum_{y'<y}\pi_\theta(y'|x)$ and $P_{\leq,\theta}(y)=\sum_{y'\leq y}\pi_\theta(y'|x)$.

\end{theorem}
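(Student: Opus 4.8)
The plan is to prove the identity by a direct combinatorial decomposition of the event that $y$ achieves the maximum reward within the group $\{y_1,\dots,y_k\}$, conditioning on which position (or positions) realizes the value $R(x,y)$. The total order on responses induced by the reward makes this clean: an event like $R(x,y_j) < R(x,y)$ or $R(x,y_j) \le R(x,y)$ decomposes into the independent per-sample probabilities $P_{<,\theta}(y)$ and $P_{\le,\theta}(y)$ respectively, since the $y_i$ are i.i.d.\ draws from $\pi_\theta(\cdot\mid x)$.

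First I would handle the subtlety that several of the $y_i$ may equal $y$ (or, more generally, tie with $y$ in reward). To count each realizing outcome exactly once, I would partition the event $\{y\in\{y_i\}_{i=1}^k,\ R(x,y)=\max_i R(x,y_i)\}$ according to the \emph{smallest} index $i$ such that $y_i = y$ (equivalently, the first index attaining the maximal reward value, breaking ties by declaring $y_i = y$; whichever convention matches the paper's total-order setup). Concretely, for each $i\in\{1,\dots,k\}$ define the event $A_i = \{y_1 < y,\ \dots,\ y_{i-1} < y,\ y_i = y,\ y_{i+1}\le y,\ \dots,\ y_k \le y\}$. These events are disjoint across $i$, and their union is exactly the event in question, because on $A_i$ the sample $y_i$ realizes $R(x,y)$ and no earlier sample exceeds it (they are all strictly below $y$), while later samples are all $\le y$, so the maximum reward equals $R(x,y)$; conversely any outcome in the target event lies in $A_i$ for $i$ the first index with $y_i=y$. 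This mirrors the worked numerical example in Figure~\ref{fig:disjoint}, where $P(\max\{D_1,D_2,D_3\}=4)$ is split into the three disjoint cases by the first die attaining $4$.

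Next I would compute $P(A_i)$ by independence: the first $i-1$ factors each contribute $P(y_j < y) = P_{<,\theta}(y)$, the $i$-th factor contributes $P(y_i = y) = \pi_\theta(y\mid x)$, and the last $k-i$ factors each contribute $P(y_j \le y) = P_{\le,\theta}(y)$, giving $P(A_i) = P_{<,\theta}(y)^{i-1}\,\pi_\theta(y\mid x)\,P_{\le,\theta}(y)^{k-i}$. Summing over $i=1,\dots,k$ yields exactly the claimed formula. I would then briefly note that this is a finite telescoping-type sum (it equals $\pi_\theta(y\mid x)\cdot\frac{P_{\le,\theta}(y)^k - P_{<,\theta}(y)^k}{P_{\le,\theta}(y) - P_{<,\theta}(y)} = \pi_\theta(y\mid x)\cdot\frac{P_{\le,\theta}(y)^k - P_{<,\theta}(y)^k}{\pi_\theta(y\mid x)} = P_{\le,\theta}(y)^k - P_{<,\theta}(y)^k$ whenever $\pi_\theta(y\mid x)>0$), which provides a useful sanity check: the right-hand side is the probability that the max is $\le R(x,y)$ minus the probability that the max is $< R(x,y)$, i.e.\ the probability that the maximum reward equals $R(x,y)$ \emph{and} is attained, summed over the responses sharing that reward value — consistent with the event described.

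The main obstacle is not the algebra but getting the tie-breaking bookkeeping exactly right: if multiple samples can share the reward of $y$ (including other responses $y'\ne y$ with $R(x,y')=R(x,y)$, since the "order" is only a preorder on responses, not a strict order), the naive inclusion of a $P_{\le,\theta}$ factor risks double-counting, and one must be careful that $A_i$ is defined via $y_i = y$ (a response-level equality) rather than via $R(x,y_i)=R(x,y)$. I would therefore state explicitly that the event of interest requires $y$ itself to appear among the samples and carry the maximal reward, and that the disjointification is by the first occurrence of the response $y$; with that convention the factorization above is exact and the proof is complete.
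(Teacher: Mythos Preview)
Your proposal is correct and follows essentially the same route as the paper: decompose the event by the first index $i$ at which $y_i=y$, use independence of the samples to factor each piece as $P_{<,\theta}(y)^{i-1}\pi_\theta(y\mid x)P_{\le,\theta}(y)^{k-i}$, and sum over $i$. The paper's proof is exactly this argument (it even cites the same dice illustration), so there is no methodological difference; your telescoping sanity check $P_{\le,\theta}(y)^k-P_{<,\theta}(y)^k$ and your explicit discussion of the tie-breaking subtlety are additions beyond what the paper writes, but the core decomposition is identical.
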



\begin{proof}
Observe that the event $y \in\{y_i\}_{i=1}^k$ indicates $R(x,y)\leq\max_{1\le i\le k}R(x,y_i)$. This implies:
\begin{equation} \nonumber
    \begin{aligned}
        &P(y \in\{y_i\}_{i=1}^k,R(x,y)=\max_{1\le i\le k}R(x,y_i) )\\
        =&P(y \in\{y_i\}_{i=1}^k,R(x,y)\geq\max_{1\le i\le k}R(x,y_i))\\
        =&P(y \in\{y_i\}_{i=1}^k,R(x,y)\geq R(x,y_i) \ \forall\,i)
    \end{aligned}
\end{equation}
Moreover, the event $y \in\{y_i\}_{i=1}^k$ can be decomposed as the disjoint union: 
$P(y \in\{y_i\}_{i=1}^k)=P(y_1=y)+P(y_1\neq y,y_2=y)+...+P(y_1\neq y,y_2\neq y,...,y_{k-1}\neq y,y_k=y)$. As a result,
\begin{equation} \nonumber
    \begin{aligned}
        &P(y_1\neq y,y_2\neq y,...,y_{i-1}\neq y,y_i=y,R(x,y)\geq R(x,y_i)\ \forall\,i) \\
        =&\prod_{j=1}^{i-1}P(R(x,y_j)<R(x,y))\cdot \pi_\theta(y|x)\cdot \prod_{j=1}^{i-1}P(R(x,y_j)\leq R(x,y))\\
        =&P_{<,\theta}(y)^{i-1}\pi_\theta(y|x)P_{\leq,\theta}(y)^{k-i}
    \end{aligned}
\end{equation}
Summing over \(i=1,\dots,k\) therefore yields Theorem \ref{theorem:bestk}:
    \begin{align*}
        &P(y \in\{y_i\}_{i=1}^k,R(x,y)=\max_{1\le i\le k}R(x,y_i))\\
        =&\sum_{i=1}^k P_{<,\theta}(y)^{i-1}\cdot \pi_\theta(y|x) \cdot P_{\leq,\theta}(y)^{k-i}  \qedhere
    \end{align*}
\end{proof}

Figure~\ref{fig:disjoint} provides a visual illustration of the proof. It demonstrates how the probability of being the maximum in a group can be computed by decomposing it into a disjoint union of several simpler, computable probabilities.

\begin{figure}[t]
\centering
\includegraphics[width=0.47\textwidth]{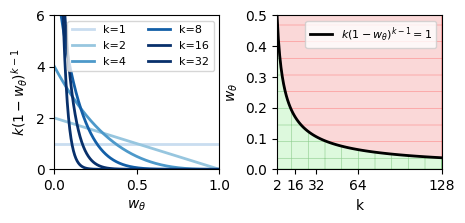}
\caption{
(Left) Weight functions $k(1-w_\theta)^{k-1}$ as $w_\theta$ varies; each curve corresponds to a different $k$; (Right) Regions shaded in green indicate parameter pairs $(k,w_\theta)$ for which $k(1-w_\theta)^{k-1}>1$, while red shading marks $k(1-w_\theta)^{k-1}<1$.
}
\label{fig:passk}
\end{figure}

By applying Theorem~\ref{theorem:bestk}, we decouple the joint distribution of multiple responses in the Max@k metric into a distribution over a single response:

\begin{theorem} The original Max@k objective
    \begin{equation}\nonumber
\mathop{\mathbb{E}}\limits_{x\sim\mathcal{D},y_1,y_2,...,y_k\sim\pi_\theta(y|x)}[\max\{R(x,y_1),R(x,y_2),...,R(x,y_k)\}] 
    \end{equation}
    is equivalent to
    \begin{equation}
\mathop{\mathbb{E}}\limits_{x\sim\mathcal{D},y\sim\pi_\theta(y|x)}R(x,y)\sum_{i=1}^k P_{<,\theta}(y)^{i-1} P_{\leq,\theta}(y)^{k-i}
    \end{equation}
\end{theorem}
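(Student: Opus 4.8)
The plan is to linearize the maximum operator and then invoke Theorem~\ref{theorem:bestk} term by term.

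First I would fix a realization $y_1,\dots,y_k$ and use the total order from Section~\ref{sec:preliminary} to single out the unique ``winner'' $y^{\star}$, the top-ranked element of the multiset $\{y_i\}_{i=1}^{k}$. By construction $R(x,y^{\star})=\max_{1\le i\le k}R(x,y_i)$ and $y^{\star}\in\{y_i\}_{i=1}^{k}$, so the pointwise identity
\[
\max_{1\le i\le k}R(x,y_i)\;=\;\sum_{y}R(x,y)\,\mathbf{1}\!\left[\,y\in\{y_i\}_{i=1}^{k},\ R(x,y)=\max_{1\le i\le k}R(x,y_i)\,\right]
\]
holds, because exactly one summand on the right is nonzero, namely the one with $y=y^{\star}$.

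Next I would take $\mathbb{E}_{y_1,\dots,y_k\sim\pi_\theta(\cdot\mid x)}$ of both sides and exchange the expectation with the sum over $y$ (a finite sum, or a routine domination argument using boundedness of $R$ if the response set is infinite). The expectation of the $y$-th indicator is precisely the probability evaluated in Theorem~\ref{theorem:bestk}, hence
\[
\mathbb{E}_{y_{1:k}}\!\left[\max_{1\le i\le k}R(x,y_i)\right]=\sum_{y}R(x,y)\sum_{i=1}^{k}P_{<,\theta}(y)^{i-1}\,\pi_\theta(y\mid x)\,P_{\leq,\theta}(y)^{k-i}.
\]
Pulling $\pi_\theta(y\mid x)$ out of the inner sum turns the outer sum over $y$ back into an expectation under $\pi_\theta(\cdot\mid x)$, giving $\mathbb{E}_{y\sim\pi_\theta(\cdot\mid x)}\!\left[R(x,y)\sum_{i=1}^{k}P_{<,\theta}(y)^{i-1}P_{\leq,\theta}(y)^{k-i}\right]$, and applying $\mathbb{E}_{x\sim\mathcal{D}}$ to both sides yields the stated equivalence.

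The step requiring the most care is the first one: the linearization is exact only because the ``winner'' $y^{\star}$ is well defined, which relies on the total order breaking reward ties. If several distinct responses could attain the maximal reward inside a single sample, Theorem~\ref{theorem:bestk}'s event would fire for more than one $y$ and the right-hand sum would overcount; the total order (equivalently, treating rewards of distinct responses as distinct) rules this out. Once uniqueness of $y^{\star}$ is granted, the remainder is just linearity of expectation applied to Theorem~\ref{theorem:bestk}: no new estimates, concentration bounds, or gradient manipulations are needed here, as those enter only in the subsequent derivation of the gradient estimator.
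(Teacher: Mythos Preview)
Your proposal is correct and follows exactly the approach the paper intends: the paper simply writes ``By applying Theorem~\ref{theorem:bestk}'' and states the result, and you have filled in the linearization-of-the-max step and the expectation/sum exchange that make that application precise. Your explicit discussion of why the winner $y^{\star}$ is unique (via the total-order/distinct-rewards convention of Section~\ref{sec:preliminary}) is a welcome clarification that the paper leaves implicit.
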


We further define the probability $w_\theta=\sum_{R(x,y)=1}\pi_\theta(y|x)$ to simplify the expression for the Pass@k objective:
\[
P_{<,\theta}(y)= \begin{cases}
    1-w_\theta & R(x,y)=1 \\
    0& R(x,y)=0
\end{cases}
\]
and
\[
P_{\leq,\theta}(y)= \begin{cases}
    1 & R(x,y)=1 \\
    1-w_\theta& R(x,y)=0
\end{cases}
\]
As a result, Theorem~\ref{theorem:bestk} also yields
\begin{corollary}
The original Pass@k objective
    \begin{equation}\nonumber
\mathop{\mathbb{E}}\limits_{x\sim\mathcal{D},y_1,y_2,...,y_k\sim\pi_\theta(y|x)}[1-\prod_{i=1}^k(1-R(x,y_i)] 
    \end{equation}
    is equivalent to
    \begin{equation}
\mathop{\mathbb{E}}\limits_{x\sim\mathcal{D},y\sim\pi_\theta(y|x)}R(x,y)\sum_{i=1}^k (1-w_\theta)^{i-1}
    \end{equation}
\end{corollary}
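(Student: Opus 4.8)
The plan is to specialize the Max@k-to-single-response identity (the preceding unnumbered Theorem) to the binary-reward case, using the closed-form expressions just derived for $P_{<,\theta}(y)$ and $P_{\leq,\theta}(y)$. First I would recall that for binary rewards the Pass@k objective $\mathbb{E}[1-\prod_{i=1}^k(1-R(x,y_i))]$ coincides exactly with the Max@k objective $\mathbb{E}[\max_i R(x,y_i)]$, since $\max_i R(x,y_i)\in\{0,1\}$ equals $1$ precisely when at least one $R(x,y_i)=1$, i.e. when $\prod_i(1-R(x,y_i))=0$. Hence the equivalent form from the previous theorem applies verbatim: the objective equals $\mathbb{E}_{x,y\sim\pi_\theta}\!\big[R(x,y)\sum_{i=1}^k P_{<,\theta}(y)^{i-1}P_{\leq,\theta}(y)^{k-i}\big]$.

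Next I would evaluate the summand on the two reward levels. When $R(x,y)=0$ the factor $R(x,y)$ kills the term, so only $R(x,y)=1$ contributes; and on that event the case analysis gives $P_{<,\theta}(y)=1-w_\theta$ and $P_{\leq,\theta}(y)=1$, so $P_{<,\theta}(y)^{i-1}P_{\leq,\theta}(y)^{k-i}=(1-w_\theta)^{i-1}\cdot 1^{k-i}=(1-w_\theta)^{i-1}$. Substituting this back yields exactly $\mathbb{E}_{x\sim\mathcal{D},\,y\sim\pi_\theta(y|x)}\big[R(x,y)\sum_{i=1}^k(1-w_\theta)^{i-1}\big]$, which is the claimed corollary.

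Since this is essentially a direct substitution into an already-established identity, there is no serious obstacle; the only point requiring a line of justification is the reduction of Pass@k to Max@k under binary rewards (the observation $1-\prod_i(1-R(x,y_i))=\max_i R(x,y_i)$ when $R\in\{0,1\}$), together with checking that the $R(x,y)=0$ branch contributes nothing so that the $P_{<},P_{\leq}$ values need only be read off on the $R=1$ branch. One could alternatively bypass the prior theorem and argue directly from Theorem~\ref{theorem:bestk}: sum the per-response maximum probability $\sum_{i=1}^k P_{<,\theta}(y)^{i-1}\pi_\theta(y|x)P_{\leq,\theta}(y)^{k-i}$ against $R(x,y)$ over all $y$, noting that $\sum_y R(x,y)\,P(y\text{ is the max in }\{y_i\})$ telescopes to the probability that the maximum reward in the group is $1$, which is Pass@k; but the substitution route above is the shortest.
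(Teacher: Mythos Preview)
Your proposal is correct and is essentially the paper's own argument: the corollary is obtained by specializing the Max@k single-response identity to binary rewards via the displayed case analysis for $P_{<,\theta}(y)$ and $P_{\leq,\theta}(y)$, noting that only the $R(x,y)=1$ branch contributes and there $P_{<,\theta}(y)^{i-1}P_{\leq,\theta}(y)^{k-i}=(1-w_\theta)^{i-1}$. The only thing you make more explicit than the paper is the identity $1-\prod_i(1-R(x,y_i))=\max_i R(x,y_i)$ for $R\in\{0,1\}$, which the paper leaves implicit when passing from the Max@k theorem to this corollary.
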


\subsection{RSPO for Pass@k} \label{sec:passk}
We first study how to optimize the Pass@k objective by directly applying the policy gradient method:
\begin{equation} \nonumber
    \begin{aligned}
&\nabla_\theta\mathop{\mathbb{E}}\limits_{x\sim\mathcal{D},y\sim\pi_\theta(y|x)}R(x,y)\sum_{i=1}^k (1-w_\theta)^{i-1}\\
&\approx \frac{1}{|\mathcal{D}_b|}\sum_{x\sim \mathcal{D}_b}\frac{1}{m}\sum_{j=1}^m R(x,y^j)\sum_{i=1}^k \nabla_\theta[(1-w_\theta)^{i-1}\cdot \pi_\theta(y^j|x)]
    \end{aligned}
\end{equation}

The difficulty arises in computing $\nabla_\theta[(1-w_\theta)^{i-1}\cdot \pi_\theta(y^j|x)]$ as it produces a term of $(i-1)\pi_\theta(y^j|x)(1-w_\theta)^{i-2}\nabla_\theta w_\theta$. This introduces a coupling between the optimization of a particular response \( y \) and all other responses \( y' \) for which \( R(x, y') > 0 \). Such nested gradient dependencies across responses are likely a consequence of the underlying "hitchhiking" behavior inherent in the Max@k objective, and they pose significant challenges for effective optimization.

To address this issue, we observe that although the gradients are interdependent, the additional term $(i-1)\pi_\theta(y^j|x)(1-w_\theta)^{i-2}\nabla_\theta w_\theta$ consistently involves the factor \( \nabla_\theta w_\theta \). This structural pattern suggests that these terms may be globally aggregated or canceled, thereby simplifying the optimization process:
\begin{equation} \nonumber
    \begin{aligned}
&\nabla_\theta\mathop{\mathbb{E}}\limits_{x\sim\mathcal{D},y\sim\pi_\theta(y|x)}R(x,y)\sum_{i=1}^k (1-w_\theta)^{i-1}\\
&= \mathop{\mathbb{E}}\limits_{x\sim\mathcal{D}}\sum_{y\in\textbf{w}}\nabla_\theta[\pi_\theta(y|x)\sum_{i=1}^k (1-w_\theta)^{i-1}] \\
&\text{where we denote }\textbf{w}=\{y|R(x,y)=1\}.\\
&= \mathop{\mathbb{E}}\limits_{x\sim\mathcal{D}}\sum_{y\in\textbf{w}}\nabla_\theta[\pi_\theta(y|x)\frac{1-(1-w_\theta)^k}{w_\theta}] \\
&\text{because of geometric sum formula.}\\
&=\mathop{\mathbb{E}}\limits_{x\sim\mathcal{D}}\sum_{y\in\textbf{w}}\large [
\frac{w_\theta\nabla_\theta\pi_\theta-\pi_\theta\nabla_\theta w_\theta}{w_\theta^2}(1-(1-w_\theta)^k) \\
&\quad+ \frac{\pi_\theta}{w_\theta}\cdot k(1-w_\theta)^{k-1} \nabla_\theta w_\theta \large ] \\
&=\mathop{\mathbb{E}}\limits_{x\sim\mathcal{D}} \large [
\frac{w_\theta\sum_{y\in\textbf{w}}\nabla_\theta\pi_\theta-\sum_{y\in\textbf{w}}\pi_\theta\nabla_\theta w_\theta}{w_\theta^2}(1-(1-w_\theta)^k) \\
&\quad+ \frac{\sum_{y\in\textbf{w}}\pi_\theta}{w_\theta}\cdot k(1-w_\theta)^{k-1} \nabla_\theta w_\theta \large ] \\
&\text{the numerator becomes }w_\theta\nabla_\theta w_\theta-w_\theta\nabla_\theta w_\theta=0\text{, yielding}\\
    \end{aligned}
\end{equation}
\begin{equation}
    \begin{aligned}
&=\mathop{\mathbb{E}}\limits_{x\sim\mathcal{D}} k(1-w_\theta)^{k-1} \nabla_\theta w_\theta=\mathop{\mathbb{E}}\limits_{x\sim\mathcal{D}}\sum_{y\in\textbf{w}}k(1-w_\theta)^{k-1} \nabla_\theta\pi_\theta(y|x) \\
&=\mathop{\mathbb{E}}\limits_{x\sim\mathcal{D},y\sim\pi_\theta(y|x)} k(1-w_\theta)^{k-1}R(x,y) \nabla_\theta\log\pi_\theta(y|x)        
    \end{aligned}
\end{equation}

Surprisingly, despite the underlying complexity of the gradients, the final expression-weighted gradients of the log-likelihood are remarkably clean and interpretable. Specifically, when $R(x, y) = 1$, the weight applied to $\nabla_\theta \log \pi_\theta(y \mid x)$ is $k(1 - w_\theta)^{k - 1}$. Otherwise, when $R(x, y) = 0$, the weight is zero.

We now examine the meaning of the term $k(1 - w_\theta)^{k - 1}$. When $k = 1$, the weight simplifies to $1$, independent of $w_\theta$, which aligns with the standard reinforcement learning objective—equivalent to a Max@1 formulation. For $k > 1$, the weight $k(1 - w_\theta)^{k - 1}$ is a monotonically decreasing function of $w_\theta$, assigning higher importance to smaller values of $w_\theta$ and diminishing the influence of larger ones. This behavior becomes more pronounced as $k$ increases.

The left plot in Figure~\ref{fig:passk} illustrates the decay pattern of $k(1 - w_\theta)^{k - 1}$ as $w_\theta$ increases. The right plot shows the regime in which RSPO amplifies gradients compared to the conventional RL objective, i.e., when $k(1 - w_\theta)^{k - 1} > 1$.

We use the concept of \textit{opportunity cost} to explain why the weight $k(1 - w_\theta)^{k - 1}$ decreases with increasing $w_\theta$. Since the total generation probability sums to 1, increasing the probability of one token/response/pattern necessarily decreases the probability assigned to others. Consequently, if the policy is already capable of generating a correct answer within $k$ attempts (e.g., when $w_\theta = 0.5$ and $k = 10$, the probability of generating only incorrect answers is less than $\frac{1}{1000}$), further reinforcement of its win rate becomes redundant. Instead, it is more beneficial to allocate the remaining probability mass to other regions that may yield higher returns.

We then derive estimators for the Pass@k objective, sampling $n$ responses from the policy $\pi_\theta$ for each prompt.

One obvious estimator is
\begin{equation}
    \frac{1}{|\mathcal{D}_b|}\sum_{x\sim \mathcal{D}_b}
    k(1-\frac{1}{n}\sum_{i=1}^nR(x,y^i))^{k-1}
    \frac{1}{n}\sum_{i=1}^n 
    R(x,y^i)  \nabla_\theta\log\pi_\theta(y^i|x)
\end{equation}
where we estimate $(1-w_\theta)^{k-1}$ by $(1-\frac{1}{n}\sum_{i=1}^nR(x,y^i))^{k-1}$.

The advantage of this estimator is that it is both simple and scalable. It leverages $n$ samples to estimate $w_\theta$, independent of the value of $k$. However, the estimator is biased, as the expectation $E((1-\frac{1}{n}\sum_{i=1}^nR(x,y^i))^{k-1})\neq (1-w_\theta)^{k-1}$.

We then develop an unbiased estimator:
\begin{theorem} \label{theorem:passk}
When $n\ge k$, an unbiased estimator for Pass@k is:
\begin{equation}
    J_\text{pass@k}^\text{RSPO}(\theta)=
    \frac{1}{|\mathcal{D}_b|}\sum_{x\sim \mathcal{D}_b}
    k\frac{\tbinom{n-c}{k-1}}{\tbinom{n-1}{k-1}}
    \frac{1}{n}\sum_{i=1}^n 
    R(x,y^i)  \nabla_\theta\log\pi_\theta(y^i|x)    
\end{equation}
where $c=\sum_{i=1}^n R(x,y^i)$, i.e., the number of correct responses in the sample.
\end{theorem}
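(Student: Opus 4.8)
The plan is to show that $\mathbb{E}\big[J_\text{pass@k}^\text{RSPO}(\theta)\big]$ equals the exact Pass@k gradient derived above, namely $\mathbb{E}_{x\sim\mathcal{D}}\, k(1-w_\theta)^{k-1}\,\mathbb{E}_{y\sim\pi_\theta(\cdot\mid x)}\!\big[R(x,y)\nabla_\theta\log\pi_\theta(y\mid x)\big]$. Because the empirical average over $\mathcal{D}_b$ and the inner average over the $n$ i.i.d.\ draws $y^1,\dots,y^n\sim\pi_\theta(\cdot\mid x)$ are both averages of identically distributed terms, it suffices, after fixing a prompt $x$ and a single index $i$, to establish
\[
\mathbb{E}\!\left[\frac{\binom{n-c}{k-1}}{\binom{n-1}{k-1}}\,R(x,y^i)\,\nabla_\theta\log\pi_\theta(y^i\mid x)\right]
=(1-w_\theta)^{k-1}\,\mathbb{E}_{y\sim\pi_\theta(\cdot\mid x)}\!\big[R(x,y)\nabla_\theta\log\pi_\theta(y\mid x)\big],
\]
with $c=\sum_{j=1}^n R(x,y^j)$ and $w_\theta$ the per-prompt correctness probability; summing this over $i$, scaling by $k$, and averaging over $\mathcal{D}_b$ then gives the theorem.

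First I would condition on $y^i$ and peel off its own contribution to the count, writing $c = R(x,y^i) + c_{-i}$ with $c_{-i}=\sum_{j\ne i}R(x,y^j)$. On the event $R(x,y^i)=0$ the summand vanishes and both sides of the display are zero, so only $R(x,y^i)=1$ matters; there $c = 1 + c_{-i}$ and hence $n-c = (n-1)-c_{-i}$. The key structural fact is that $c_{-i}$ depends only on the $n-1$ samples other than $y^i$, so conditionally on $y^i$ it is distributed as $\mathrm{Binomial}(n-1,w_\theta)$ and is independent of $R(x,y^i)\nabla_\theta\log\pi_\theta(y^i\mid x)$, a function of $y^i$ alone. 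The conditional expectation therefore factorizes, and the whole claim reduces to the scalar identity
\[
\mathbb{E}_{\,c_{-i}\sim\mathrm{Bin}(n-1,\,w_\theta)}\!\left[\frac{\binom{(n-1)-c_{-i}}{k-1}}{\binom{n-1}{k-1}}\right]=(1-w_\theta)^{k-1}.
\]

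The remaining step is this combinatorial moment identity, which I would prove by the standard U-statistic argument: put $m=n-1$, $q=1-w_\theta$, and $Z = m - c_{-i}\sim\mathrm{Bin}(m,q)$, the number of incorrect samples among the $m$ others. Then $\binom{Z}{k-1}$ counts the $(k-1)$-subsets of those $m$ indices all of whose members are incorrect; taking expectations and using linearity over the $\binom{m}{k-1}$ candidate subsets, each entirely incorrect with probability $q^{k-1}$, yields $\mathbb{E}\big[\binom{Z}{k-1}\big]=\binom{m}{k-1}q^{k-1}$, which is exactly the identity after dividing by $\binom{m}{k-1}=\binom{n-1}{k-1}$. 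The hypothesis $n\ge k$ ensures $\binom{n-1}{k-1}\ge 1$ so the ratio is well defined, and the usual convention $\binom{a}{b}=0$ for $a<b$ makes the cases $c_{-i}>n-k$ harmless. Substituting back, the sum over $i$ reinstates $\mathbb{E}_{y\sim\pi_\theta(\cdot\mid x)}$, the prefactor $k$ passes through unchanged, and the average over $\mathcal{D}_b$ recovers $\mathbb{E}_{x\sim\mathcal{D}}$.

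I expect the only real obstacle to be bookkeeping: cleanly isolating $y^i$'s contribution to $c$ so that the residual count is an honest $\mathrm{Binomial}(n-1,w_\theta)$ that is independent of $y^i$ — once that separation is in place the combinatorial identity $\mathbb{E}\big[\binom{Z}{k-1}\big]=\binom{m}{k-1}q^{k-1}$ is classical (it is the same identity underlying the Chen et al.\ Pass@k estimator cited earlier). It is also worth stating explicitly that ``unbiased'' is meant with respect to the joint randomness of prompt sampling and response sampling, with the per-prompt $w_\theta$ held fixed inside the conditional argument.
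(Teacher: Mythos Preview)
Your proposal is correct and follows essentially the same approach as the paper's proof: both isolate the contribution of the single sample $y^i$ from the remaining $n-1$ samples (your $c_{-i}$ is the paper's $c^{-i}$), invoke independence to factor the expectation, and establish the scalar identity $\mathbb{E}\big[\binom{(n-1)-c_{-i}}{k-1}\big]=\binom{n-1}{k-1}(1-w_\theta)^{k-1}$ via the same U-statistic subset-counting argument. The only cosmetic difference is ordering: you case-split on $R(x,y^i)$ up front to rewrite $n-c=(n-1)-c_{-i}$ on the nonvanishing terms, whereas the paper carries $c^{-i}$ throughout and collapses $n-1-c^{-i}$ to $n-c$ only in the final line.
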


The objective function $J_\text{pass@k}^\text{RSPO}(\theta)$ incorporates two key mechanisms to ensure an unbiased gradient estimator. First, when estimating the gradient term $k(1 - w_\theta)^{k-1} R(x,y^i) \nabla_\theta \log \pi_\theta(y^i \mid x)$, $J_\text{pass@k}^\text{RSPO}(\theta)$ uses the remaining $n-1$ responses (excluding $y^i$) to estimate the weight $(1 - w_\theta)^{k-1}$. This separation guarantees independence between the term $R(x,y^i)  \nabla_\theta\log\pi_\theta(y^i|x)$ and the estimator of $(1 - w_\theta)^{k-1}$, thereby ensuring that the expectation of their product equals the product of their expectation ($\mathbb{E}[XY] = \mathbb{E}[X] \mathbb{E}[Y]$). 
Second, to construct an unbiased estimator of $(1 - w_\theta)^{k-1}$ using the $n-1$ responses, $J_\text{pass@k}^\text{RSPO}(\theta)$ enumerates all $\tbinom{n-1}{k-1}$ subsets of size $k-1$ and counts the number of subsets, denoted $\tbinom{n-c}{k-1}$, that consist solely of zero-reward responses. The expectation of each subset matches $(1 - w_\theta)^{k-1}$, yielding an unbiased estimator.

The complete proof of Theorem~\ref{theorem:passk} is provided in Appendix~\ref{app:proof_passk}.

\subsection{RSPO for Max@k} \label{sec:maxk}

In practice, the Max@k objective often appears alongside a continuous reward model, where the probability $P_{\leq,\theta}(y)$ is approximated by $P_{<,\theta}(y) + \pi(y \mid x)$. Notably, in large language models, the policy probability $\pi(y \mid x)$ tends to be extremely small. Given these characteristics, we treat $P_{\leq,\theta}(y)$ and $P_{<,\theta}(y)$ as approximately equal in this section to improve clarity and simplify exposition. For completeness, we provide a rigorous derivation of unbiased estimators without any such approximation in the Appendix~\ref{app:maxk}.

\begin{algorithm}[t]
\caption{Risk-Seeking Policy Optimization}
\label{alg:rspo}
\begin{algorithmic}[1]
\Require Initial policy $\pi_{\theta}$, prompts $\mathcal{D}$, hyperparameters $k,n$
\For{step $= 1,2,...$}
    \State Sample a batch $\mathcal{D}_b$ from $\mathcal{D}$
    \State Sample $n$ responses $\{y^i\}_{i=1}^{n} \sim \pi_\theta(\cdot|x)$ for each $x \in \mathcal{D}_b$
    \State Compute rewards $\{R(x,y^i)\}_{i=1}^{n}$ for every $y^i$ and $x$
    \State Update policy $\pi_{\theta}$ by maximizing $J_\text{pass@k}^\text{RSPO}(\theta)$ or $J_\text{max@k}^\text{RSPO}(\theta)$ 
\EndFor
\State \Return $\pi_{\theta}$
\end{algorithmic}
\end{algorithm}

When we treat $P_{\leq,\theta}(y)$ and $P_{<,\theta}(y)$ as approximately equal, the original Max@k objective
    \begin{equation} \nonumber
\mathop{\mathbb{E}}\limits_{x\sim\mathcal{D},y\sim\pi_\theta(y|x)}R(x,y)\sum_{i=1}^k P_{<,\theta}(y)^{i-1} P_{\leq,\theta}(y)^{k-i}
    \end{equation}
becomes
    \begin{equation}
\mathop{\mathbb{E}}\limits_{x\sim\mathcal{D},y\sim\pi_\theta(y|x)}R(x,y)\cdot k P_{\leq,\theta}(y)^{k-1}
    \end{equation}

We then develop the gradient of the new objective:
    \begin{equation} \nonumber
    \begin{aligned}
&\nabla_\theta\mathop{\mathbb{E}}\limits_{x\sim\mathcal{D},y\sim\pi_\theta(y|x)}R(x,y)\cdot k P_{\leq,\theta}(y)^{k-1} \\
=& \mathop{\mathbb{E}}\limits_{x\sim\mathcal{D}}\sum_{y}R(x,y)k\nabla_\theta[\pi_\theta(y|x)P_{\leq,\theta}(y)^{k-1}] \\
=& \mathop{\mathbb{E}}\limits_{x\sim\mathcal{D}}\sum_{y}R(x,y)k
[P_{\leq,\theta}(y)^{k-1}\nabla_\theta\pi_\theta(y|x)\\
&+(k-1)\pi_\theta(y|x)P_{\leq,\theta}(y)^{k-2}\nabla_\theta P_{\leq,\theta}(y)] \\
    \end{aligned}
    \end{equation}
The equations above imply that computing the gradient with respect to $y$ also induces contributions to the gradients of all $y'$ such that $R(x, y') < R(x, y)$. In other words, the gradient of $y$ is influenced not only by its own value but also by all $y'$ for which $R(x, y') > R(x, y)$. Rearranging the terms accordingly yields:
    \begin{equation} \nonumber
    \begin{aligned}
=& \mathop{\mathbb{E}}\limits_{x\sim\mathcal{D}}\sum_{y}[kR(x,y)
P_{\leq,\theta}(y)^{k-1}\\
&+k(k-1)\sum_{y'\geq y}R(x,y')\pi_\theta(y'|x)P_{\leq,\theta}(y')^{k-2}]\nabla_\theta\pi_\theta(y|x) \\
=& \mathop{\mathbb{E}}\limits_{x\sim\mathcal{D}}\sum_{y}kR(x,y)
P_{\leq,\theta}(y)^{k-1}\nabla_\theta\pi_\theta(y|x)\\
&+\mathop{\mathbb{E}}\limits_{x\sim\mathcal{D}}\sum_{y}k(k-1)\sum_{y'}R(x,y')\pi_\theta(y'|x)P_{\leq,\theta}(y')^{k-2}\nabla_\theta\pi_\theta(y|x) \\
&-\mathop{\mathbb{E}}\limits_{x\sim\mathcal{D}}\sum_{y}k(k-1)\sum_{y'< y}R(x,y')\pi_\theta(y'|x)P_{\leq,\theta}(y')^{k-2}\nabla_\theta\pi_\theta(y|x) \\
=& \mathop{\mathbb{E}}\limits_{x\sim\mathcal{D}}\sum_{y}kR(x,y)
P_{\leq,\theta}(y)^{k-1}\nabla_\theta\pi_\theta(y|x)\\
&-\mathop{\mathbb{E}}\limits_{x\sim\mathcal{D}}\sum_{y}k(k-1)\sum_{y'< y}R(x,y')\pi_\theta(y'|x)P_{\leq,\theta}(y')^{k-2}\nabla_\theta\pi_\theta(y|x) \\
    \end{aligned}
    \end{equation}
because $\sum_{y'}R(x,y')\pi_\theta(y'|x)P_{\leq,\theta}(y')^{k-2}$ is an constant and 

\noindent $\mathbb{E}_{y\sim\pi_\theta} [c \nabla_\theta \log \pi_\theta(y|x)] = \nabla_\theta c = 0$ for any constant $c$.

\begin{figure}[t]
\centering
\includegraphics[width=0.47\textwidth]{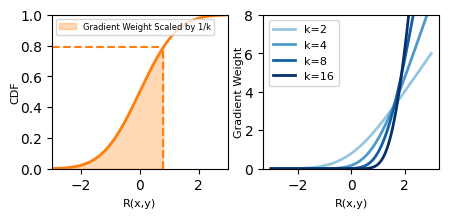}
\caption{ Assume $\pi_\theta$ is an standard Gaussian distribution.
(Left) The $k=2$ case. The curve represents cumulative distribution function of $\pi_\theta$, corresponding to $\sum_{y'< y}\pi_\theta(y'|x)$.
The filled area under the CDF curve thus represents the scaled gradient weight, $\sum_{y'< y}(R(x,y)-R(x,y'))\pi_\theta(y'|x)$; (Right) Gradient weight $k\sum_{y'< y}(R(x,y)-R(x,y'))\pi_\theta(y'|x)(k-1)P_{\leq,\theta}(y')^{k-2}$ as $R(x,y)$ varies; each curve corresponds to a different $k$.}
\label{fig:maxk}
\end{figure}

As a result, the Max@k objective is equivalent to 
\begin{equation}
\begin{aligned}
\mathop{\mathbb{E}}\limits_{x\sim\mathcal{D},y\sim\pi_\theta(y|x)}[kR(x,y)
P_{\leq,\theta}(y)^{k-1}-k(k-1)g(y)]\nabla_\theta\log\pi_\theta(y|x)
\end{aligned}
\end{equation}
where $g(y)=\sum_{y'< y}R(x,y')\pi_\theta(y'|x)P_{\leq,\theta}(y')^{k-2}$.

Although the Max@k objective is more complex than Pass@k— primarily due to the involvement of the function $g(y)$, which depends on other responses—we show that it can still be expressed in the form of a weighted sum of log-likelihood gradients. Specifically, the weight associated with each term is given by $kR(x,y)P_{\leq,\theta}(y)^{k-1}$

\noindent $-k(k-1)g(y)$.

We show that
\begin{equation} \nonumber
    \begin{aligned}
        &kR(x,y)P_{\leq,\theta}(y)^{k-1}-k(k-1)g(y) \\
        \approx& kR(x,y)P_{<,\theta}(y)^{k-1}-k\sum_{y'< y}R(x,y')(k-1)\pi_\theta(y'|x)P_{\leq,\theta}(y')^{k-2} \\
        \approx& kR(x,y)\sum_{y'< y} P(y' \in\{y_i\}_{i=1}^{k-1},R(x,y')=\max_{1\le i\le k-1}R(x,y_i)) \\
        &-k\sum_{y'< y} R(x,y')P(y' \in\{y_i\}_{i=1}^{k-1},R(x,y')=\max_{1\le i\le k-1}R(x,y_i)) \\
        \approx& k\sum_{y'< y}(R(x,y)-R(x,y'))\pi_\theta(y'|x)(k-1)P_{\leq,\theta}(y')^{k-2}
    \end{aligned}
\end{equation}

The gradient weight can be approximated as 
$k \sum_{y' < y} (R(x, y) -$ 

\noindent $R(x, y')) \pi_\theta(y' \mid x) (k - 1) P_{\leq, \theta}(y')^{k - 2}$,
which captures the \textit{marginal contribution} of action $y$ toward increasing the expected maximum reward. This marginal contribution arises from two interacting effects that influence the benefit to the Max@k objective when the probability mass $\nabla_\theta \pi_\theta(y \mid x)$ is increased. First, it reflects the higher likelihood that $y$ itself becomes the top-ranked action among the $k$ sampled draws, represented by the $R(x, y)$ term. Second, it accounts for the decreased probability that any lower-reward action $y' < y$ would have otherwise been selected as the maximum, captured by the $R(x, y')$ term. Together, these effects quantify how shifting probability mass toward $y$ improves the expected outcome.

The left plot in Figure~\ref{fig:maxk} depicts the filled area under the cumulative distribution function of $\pi_\theta(y|x)(k-1)P_{\leq,\theta}(y)^{k-2}$, which corresponds to the gradient weight scaled by $\frac{1}{k}$. The right plot illustrates how the gradient weight increases with different $k$.

The $g(y)$ can be expressed as $\mathop{\mathbb{E}}_{y'}R(x,y')P_{\leq,\theta}(y')^{k-2}\mathbf{1}(R(x,y')$

\noindent$<R(x,y))$, which suggests that $g(y)$ can be approximated via Monte Carlo estimation. A straightforward estimator using $n$ responses first estimates $P_{\leq,\theta}(y)$ by $\hat{P}_{\leq,\theta}(y)=\frac{1}{n}\sum_{j=1}^n \mathbf{1}(R(x,y^j)\leq R(x,y))$ and subsequently estimates $g(y)$ by $\hat{g}(y)=\frac{1}{n}\sum_{j=1}^n \mathbf{1}(R(x,y^j)< R(x,y))R(x,y^j)\hat{P}_{\leq,\theta}(y^j)^{k-2}$, yielding the overall estimator:

\begin{equation}
    \frac{k}{n|\mathcal{D}_b|}\sum_{x\sim \mathcal{D}_b}
    \sum_{i=1}^n 
    [R(x,y^j)\hat{P}_{\leq,\theta}(y^j)^{k-1}-(k-1)\hat{g}(y^j)]
    \nabla_\theta\log\pi_\theta(y^i|x)
\end{equation}

Similar to the straightforward estimator used for Pass@k, this estimator is also simple and scalable, but biased. To address this limitation, we propose an unbiased estimator for Max@k:

\begin{theorem} \label{theorem:maxk}
When $n\ge k$, an unbiased estimator under mild approximations for Max@k is: 
\begin{equation}
\begin{aligned}
J_\text{max@k}^\text{RSPO}(\theta)=&    \frac{k}{n|\mathcal{D}_b|}\sum_{x\sim \mathcal{D}_b} \sum_{i=1}^n 
    [R(x,y^i)\frac{\tbinom{i-1}{k-1}}{\tbinom{n-1}{k-1}}-\\
&\frac{k-1}{n-1}\sum_{j=1}^{i-1}R(x,y^j)\frac{\tbinom{j-1}{k-2}}{\tbinom{n-2}{k-2}}]
    \nabla_\theta\log\pi_\theta(y^i|x)   
\end{aligned}
\end{equation}
where we sort $\{y^i\}_{i=1}^n$ by $R(x,y_1)<R(x,y_2)<...<R(x,y_n)$.
\end{theorem}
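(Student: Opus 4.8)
The plan is to establish unbiasedness of $J_\text{max@k}^\text{RSPO}(\theta)$ by showing that, conditioned on the prompt $x$, the expectation over $n$ i.i.d.\ samples $y^1,\dots,y^n \sim \pi_\theta(\cdot\mid x)$ of the bracketed weight on $\nabla_\theta\log\pi_\theta(y^i\mid x)$ matches, term by term, the target weight $kR(x,y)P_{\leq,\theta}(y)^{k-1}-k(k-1)g(y)$ appearing in the simplified Max@k gradient derived earlier in Section~\ref{sec:maxk}. The key device is the same U-statistic / combinatorial-sampling idea already used for Theorem~\ref{theorem:passk}: an order statistic or a ratio of binomial coefficients of the form $\binom{i-1}{k-1}/\binom{n-1}{k-1}$ is exactly the probability that, among a uniformly random $(k-1)$-subset of the \emph{other} $n-1$ samples, all members rank strictly below $y^i$ in reward order. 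I would make this precise by first fixing the reward-sorted labeling $R(x,y^1)<\cdots<R(x,y^n)$ and noting that, because the $n$ draws are exchangeable, it suffices to reason about a single typical index and then average.

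First I would handle the leading term. Conditioning on $y^i=y$ and on its rank being $r$ among the remaining $n-1$ samples, the quantity $\binom{r-1}{k-1}/\binom{n-1}{k-1}$ is the probability that a uniformly chosen $(k-1)$-subset of the other samples lies entirely below $y$; but under the "mild approximation" $P_{\leq,\theta}\approx P_{<,\theta}$, taking the expectation of this indicator over the i.i.d.\ draws of the other $n-1$ samples gives exactly $P_{\leq,\theta}(y)^{k-1}$. Hence $\mathbb{E}\big[R(x,y^i)\binom{i-1}{k-1}/\binom{n-1}{k-1}\,\big|\,y^i=y\big] = R(x,y)P_{\leq,\theta}(y)^{k-1}$, and crucially this conditional expectation is independent of $\nabla_\theta\log\pi_\theta(y^i\mid x)$ because it only uses the \emph{other} $n-1$ samples — this is the same independence/factorization argument ($\mathbb{E}[XY]=\mathbb{E}[X]\mathbb{E}[Y]$) invoked for Pass@k. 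Multiplying by $\pi_\theta(y\mid x)$ and summing over $y$ (i.e.\ taking the outer expectation $\frac1n\sum_i$) recovers $\mathop{\mathbb{E}}_{x,y}kR(x,y)P_{\leq,\theta}(y)^{k-1}\nabla_\theta\log\pi_\theta(y\mid x)$.

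Next I would handle the correction term $\frac{k-1}{n-1}\sum_{j=1}^{i-1}R(x,y^j)\binom{j-1}{k-2}/\binom{n-2}{k-2}$. The inner sum ranges over samples $y^j$ with $R(x,y^j)<R(x,y^i)$, so this is a double-indexed U-statistic: I would rewrite it as an average over ordered pairs $(y^i,y^j)$ with $y^j<y^i$, and then, conditioning on the \emph{values} of $y^i=y$ and $y^j=y'$ with $y'<y$, argue that $\binom{j-1}{k-2}/\binom{n-2}{k-2}$ has conditional expectation $P_{\leq,\theta}(y')^{k-2}$ over the remaining $n-2$ draws (again a ratio-of-binomials $=$ probability-that-a-random-$(k-2)$-subset-lies-below-$y'$ identity). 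The factor $\frac{k-1}{n-1}$ together with the pair-counting combinatorics is exactly what is needed so that the sum over $j$ telescopes into $\mathbb{E}_{y'}\big[R(x,y')P_{\leq,\theta}(y')^{k-2}\mathbf{1}(y'<y)\big]=g(y)$, producing the $-k(k-1)g(y)$ term after multiplying by the overall $k$ and the $\nabla_\theta\log\pi_\theta$ factor. Summing the two contributions reproduces the simplified Max@k gradient, and the stated reference to Appendix~\ref{app:maxk} is where the $P_{\leq,\theta}\approx P_{<,\theta}$ approximation would be removed by carefully tracking the $P_{\leq}$ versus $P_{<}$ distinction (which splits each order-statistic probability into "strictly below" and "equal" cases, i.e.\ replacing single binomial ratios by appropriate sums).

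I expect the main obstacle to be the bookkeeping in the correction term: making the conditioning rigorous requires disentangling two layers of sampling — choosing which of the $n$ samples plays the role of $y^i$ and which plays $y^j$, versus choosing the random $(k-2)$-subset used to evaluate $\binom{j-1}{k-2}/\binom{n-2}{k-2}$ — and verifying that these are independent so that the conditional expectation factorizes cleanly. One must also confirm the edge cases ($k=1$, where the correction term vanishes because $\binom{j-1}{k-2}=\binom{j-1}{-1}=0$, and $n=k$, where $\binom{n-1}{k-1}=1$) and that the $\nabla_\theta\log\pi_\theta(y^i\mid x)$ factor is genuinely independent of \emph{both} ratio estimators, which it is precisely because the first uses only the other $n-1$ samples and the second uses only the other $n-2$. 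Beyond that, the remaining steps are the routine identity $\mathbb{E}_{y^1,\dots,y^{m}}\mathbf{1}(\text{all }m\text{ below }y)=P_{\leq,\theta}(y)^m$ and the exchangeability argument, both of which are mechanical once the conditioning structure is set up correctly.
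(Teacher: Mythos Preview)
Your proposal is correct and mirrors the paper's own proof essentially step for step: the paper also conditions on $y^i$, rewrites $\tbinom{i-1}{k-1}/\tbinom{n-1}{k-1}$ as an average over $(k-1)$-subsets of the remaining $n-1$ samples of the indicator $I(s\mid y^i)$, takes $\mathbb{E}[I(s\mid y^i)\mid y^i]=P_{\leq,\theta}(y^i)^{k-1}$, and then invokes independence to factor against $R(x,y^i)\nabla_\theta\log\pi_\theta(y^i\mid x)$; the correction term is handled identically via $\mathbb{E}\big[\tbinom{j-1}{k-2}/\tbinom{n-2}{k-2}\,\big|\,y^i,y^j\big]=P_{\leq,\theta}(y^j)^{k-2}$ and the $\tfrac{1}{n-1}$ pair-averaging to recover $g(y^i)$. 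The only cosmetic difference is that the paper works directly with the subset-indicator sum rather than phrasing it as ``probability that a uniformly random $(k-1)$-subset lies below,'' but these are the same object.
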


The proof of Theorem~\ref{theorem:maxk} closely follows Theorem~\ref{theorem:passk}. When computing the gradient weight for $\nabla_\theta \log \pi_\theta(y^i \mid x)$, the $J_\text{max@k}^\text{RSPO}$ utilizes the remaining $n-1$ responses. Furthermore, to estimate $g(y^i)$, it iterates over all $y^j < y^i$. For each fixed pair $(i, j)$, the $J_\text{max@k}^\text{RSPO}$ then leverages the remaining $n-2$ responses to estimate $P_{\leq,\theta}(y^j)^{k-2}$ within the computation of $g(y^i)$.

The complete proof of Theorem~\ref{theorem:maxk} is provided in Appendix~\ref{app:proof_maxk}.

One key advantage of this estimator lies in the non-negativity of its gradient weights, 
which remain non-negative regardless of the sign of the reward function $R$. Importantly, these weights are exactly zero for $i < k$, and strictly positive for $i \geq k$:
\begin{equation} \nonumber
    \begin{aligned}
        &R(x,y^i)\frac{\tbinom{i-1}{k-1}}{\tbinom{n-1}{k-1}}-
\frac{k-1}{n-1}\sum_{j=1}^{i-1}R(x,y^j)\frac{\tbinom{j-1}{k-2}}{\tbinom{n-2}{k-2}} \\
>&R(x,y^i)[\frac{\tbinom{i-1}{k-1}}{\tbinom{n-1}{k-1}}-
\frac{k-1}{n-1}\sum_{j=1}^{i-1}\frac{\tbinom{j-1}{k-2}}{\tbinom{n-2}{k-2}}] \\
=&\frac{R(x,y^i)}{\tbinom{n-1}{k-1}}[\tbinom{i-1}{k-1}-
\sum_{j=1}^{i-1}\tbinom{j-1}{k-2}]=\frac{R(x,y^i)}{\tbinom{n-1}{k-1}}[\tbinom{i-1}{k-1}-
\sum_{j=k-1}^{i-1}\tbinom{j-1}{k-2}]=0
    \end{aligned}
\end{equation}
where the last step is due to Hockey‐Stick Identity \cite{jones1996generalized}. 

The non-negativity contributes to training stability, as maximizing the log-likelihood with negative weights can cause gradient explosions due to the convex nature of the logarithmic function. As demonstrated in \cite{abdolmaleki2025learning}, negative samples necessitate KL constraints to prevent divergence during training, whereas positive samples do not require such regularization.

\clearpage
\begin{figure*}[t]
\centering
\includegraphics[width=\textwidth]{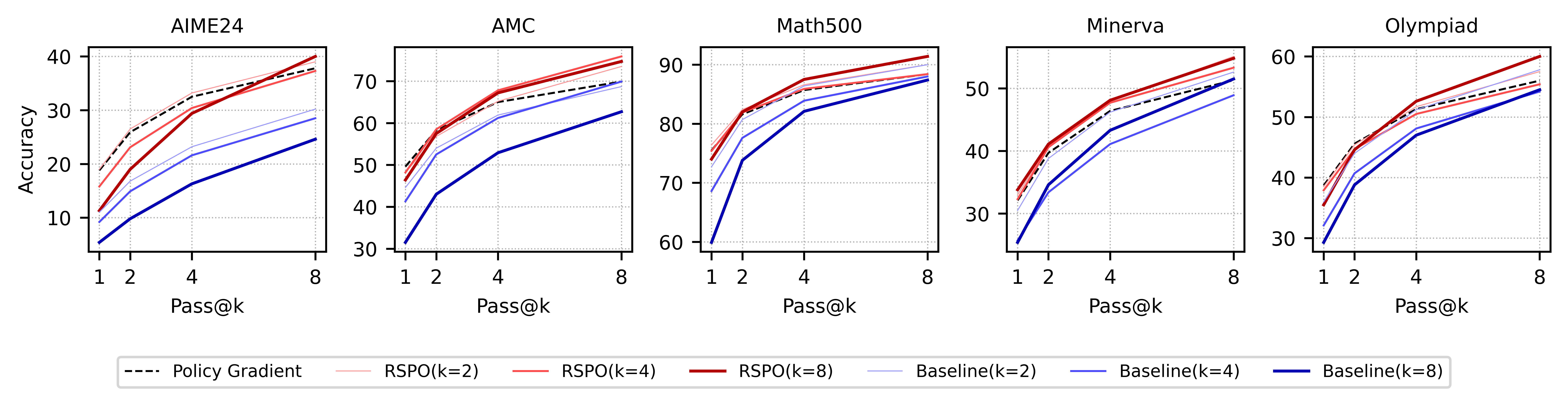}
\vspace*{-7mm}
\caption{Comparison of model accuracy across five math benchmarks at different Pass@k values.}
\label{fig:exp_1}
\end{figure*}

\newpage
\section{Experiments}
\subsection{Setup}
Following the established experimental setting of large language model post-training, we conduct a comprehensive evaluation on math reasoning. Specifically, we post-train Qwen2.5-Math-1.5B \cite{yang2024qwen25mathtechnicalreportmathematical} model on Competition Math dataset \cite{hendrycksmath2021} and assess performance on AIME2024 \cite{li2024numinamath}, AMC \cite{li2024numinamath}, Math500 \cite{hendrycks2021measuring}, Minerva \cite{lewkowycz2022solving}, and OlympiadBench \cite{he2024olympiadbench}. For answer verification, we utilize the xVerify \cite{chen2025xverifyefficientanswerverifier}. 
Our primary focus is on the Pass@k setting, with an additional evaluation of the Max@k setting presented in Section~\ref{sec:exp_maxk}.

We compare RSPO ($k=2,4,8$) with the baseline algorithm ($k=2,4,8$) mentioned in Section~\ref{sec:preliminary} and the policy gradient algorithm, which is equivalent as the special case of RSPO and the baseline when $k=1$. Unless otherwise specified, all experiments generate 16 responses per prompt ($n=16$).

To ensure a fair comparison, we maintain identical experimental settings across algorithms.  For each training step, we sample $128$ prompts from the training set and set the mini-batch size in each step to $256$. We repeat the whole training set for 10 epochs. We set the learning rate to $3e-6$ and set the warm-up ratio to $5\%$. We conduct our experiments using a server with eight 80GB H800 GPU cards. Each experiment takes approximately 16 hours.

We develop our code based on verl \cite{sheng2024hybridflow} framework. We adhere to the Artifact Pledge and guarantee to release the codes upon publication. Here, we note two useful implementation detail. First, we compute the RSPO gradient weight $\tbinom{n-c}{k-1}/\tbinom{n-1}{k-1}$ as $\prod_{i=0}^{k-2}(n - c - i)/(n - 1 - i)$ instead of computing $\tbinom{n-c}{k-1}$ and $\tbinom{n-1}{k-1}$ separately to avoid variable overflow. Second, we remove zero gradient weight responses to reduce training time. This happens when $R(x,y)=0$ for a response or $n-c<k-1$ for all responses in a group.

We provide experiment setting justifications in Appendix~\ref{app:justification}.

\subsection{Main Analyses} \label{sec:exp_main}
Figure~\ref{fig:exp_1} shows the main experiment results. 
Each graph shows the accuracy achieved at various Pass@k values for a given dataset. 
Policy Gradient is shown as a black dashed line. RSPO variants are represented by increasingly darker red lines as $k$ increases, while Baseline variants are shown in shades of blue. In addition, thicker lines also indicate larger $k$ values.

First, we observe that across all datasets the blue curves lie consistently below the red curves, demonstrating that RSPO outperforms the baseline. Moreover, within the baseline family, performance degrades as \(k\) increases: the algorithm with \(k=2\) outperforms \(k=4\), which in turn outperforms \(k=8\). This pattern highlights the "hitchhiking" problem inherent to the baselines: low‐reward responses are reinforced whenever they co‐occur with high‐reward responses in the same group. As \(k\) grows, this issue intensifies, since low‐reward responses have an even greater chance of being boosted alongside their high‐reward counterparts. 

Second, RSPO tends to achieve its best results when the training hyperparameter \(k\) matches the evaluation Pass@k. For instance, on AIME24, the policy‐gradient method, i.e. RSPO (\(k=1\)), attains its highest accuracy at Pass@1, whereas RSPO trained with $k=8$ peaks at Pass@8. This trend also holds for other values of $k$ and across all datasets. This finding suggests that practitioners may benefit from choosing their training hyperparameter to coincide with their intended Pass@k evaluation.

Recent work \cite{yue2025does} argues that RL may not incentivize reasoning capacity in LLMs, observing that base models outperform their RL–fine‑tuned counterparts on Pass@k when \(k\) is large. We contend that this phenomenon arises because their training objective (\(k=1\)) is misaligned with their evaluation metric (\(k>1\)). 
To resolve this, the community should first agree on the most appropriate choice of \(k\) for measuring a model’s reasoning capacity, and only then assess whether RL improves reasoning capacity. 
Our experiments demonstrate that RL—including RSPO—does, in fact, enhance reasoning ability under a suitably chosen Pass@k metric.

Finally, we provide empirical evidence of why RSPO excels on Pass@k. In Section~\ref{sec:passk}, we theoretically show that the gradient weight $k(1-w_\theta)^{k-1}$ naturally curbs over‑exploitation once Pass@k is sufficiently high. Figure~\ref{fig:exp_2} confirms this analysis: as \(k\) increases, RSPO maintains higher entropy, yielding more diverse generations and thereby greater opportunities to satisfy the Pass@k criterion.

\vspace{-2mm}
\begin{figure}[h]
\centering
\includegraphics[width=0.45\textwidth]{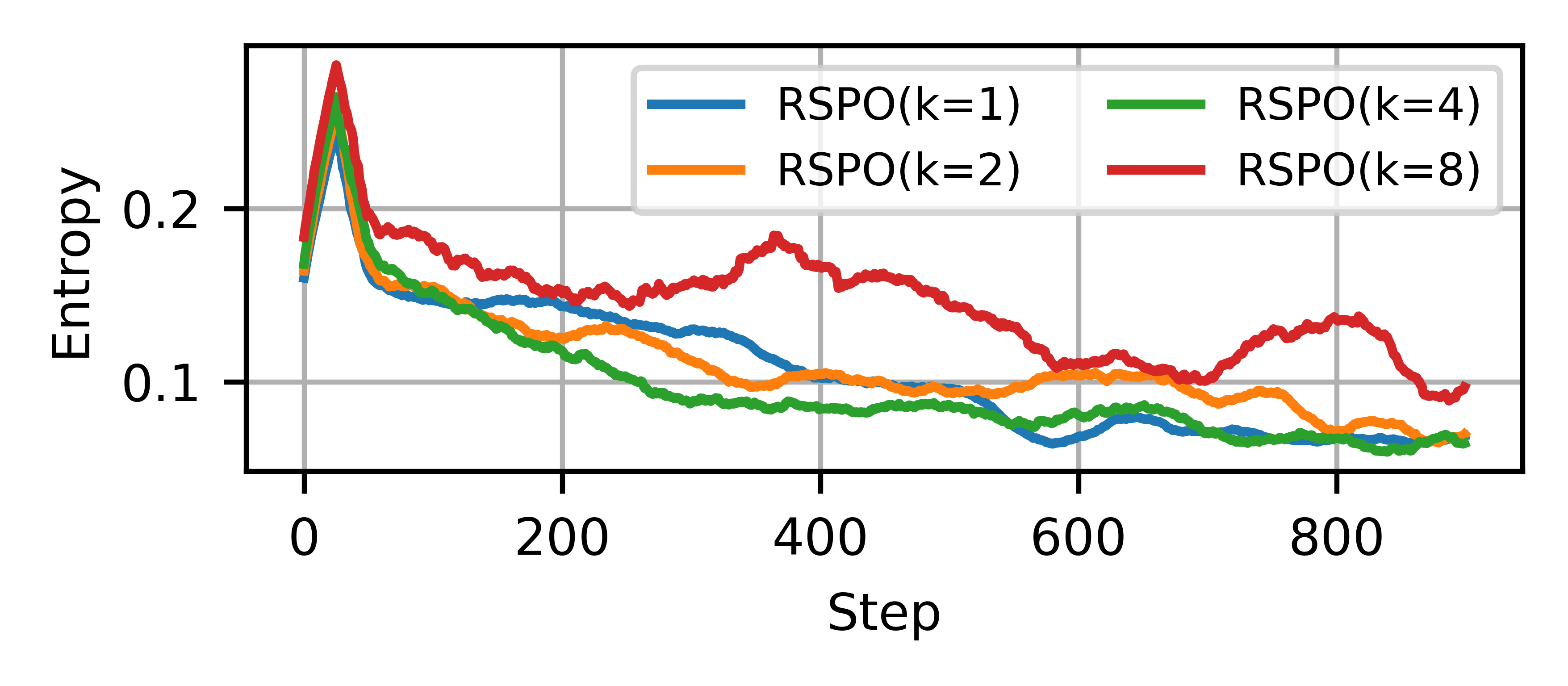}
\vspace*{-6mm}
\caption{RSPO Entropy over Training Steps.}
\label{fig:exp_2}
\end{figure}

\clearpage

\begin{figure*}[t]
\centering
\includegraphics[width=\textwidth]{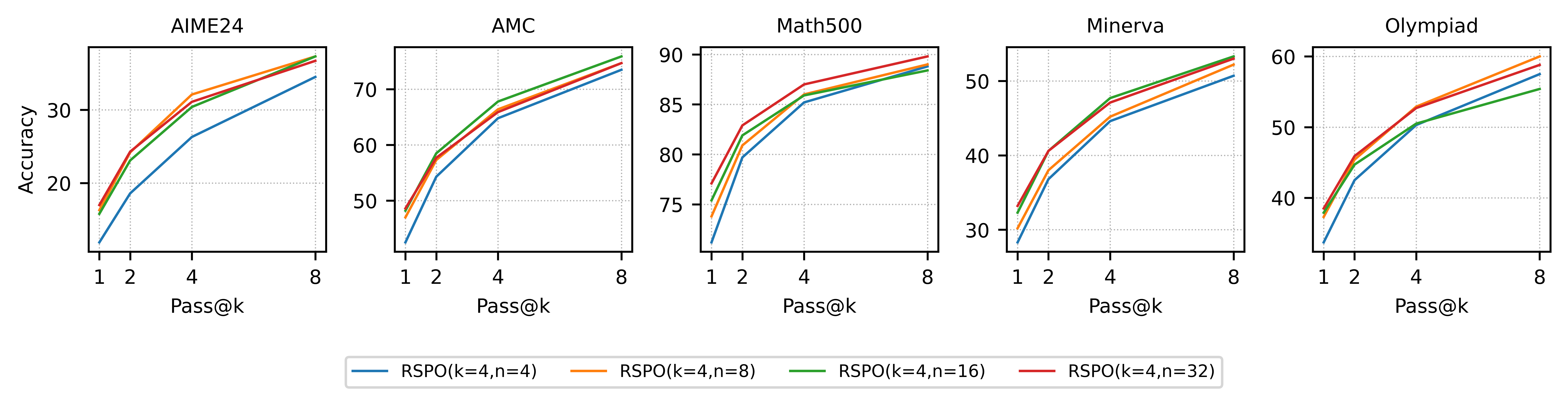}
\vspace*{-7mm}
\caption{Comparison of model accuracy as $n$ varies at different Pass@k values.}
\label{fig:exp_3}
\vspace{-3mm}
\end{figure*}

\begin{figure*}[t]
\centering
\includegraphics[width=\textwidth]{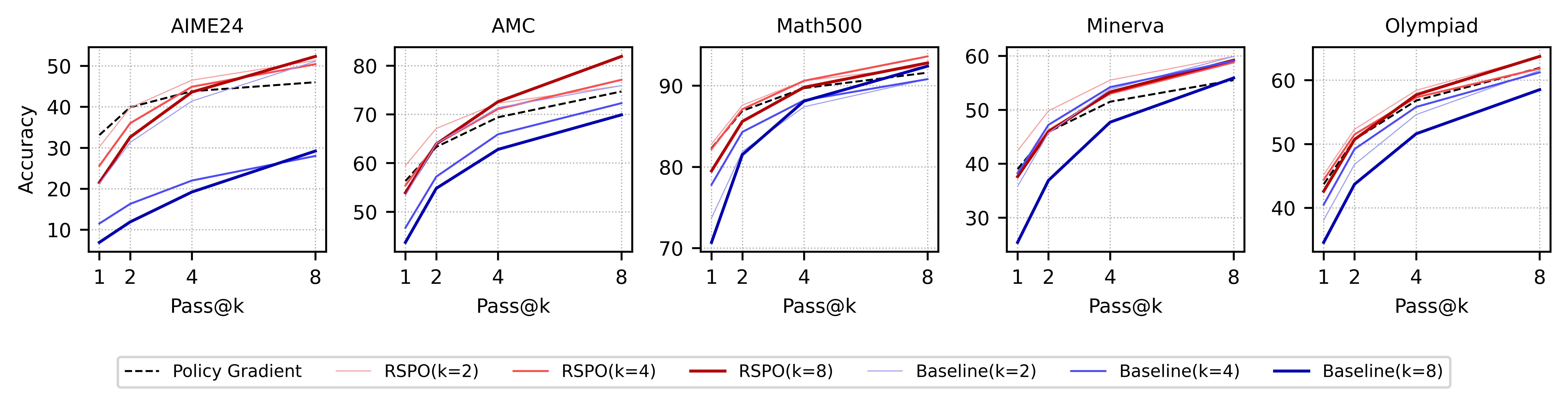}
\vspace*{-7mm}
\caption{Comparison of model accuracy for the 7B model at different Pass@k values.}
\label{fig:exp_4}
\vspace{-3mm}
\end{figure*}

\subsection{Scalability}
In this experiment, we fix \(k = 4\) and vary the number of samples per prompt, \(n \in \{4,8,16,32\}\), to evaluate the scalability of RSPO. Figure~\ref{fig:exp_3} presents the results, showing that increasing \(n\) generally improves accuracy across all Pass@k metrics. In particular, \(n=4\) yields the lowest performance regardless of dataset or metric, while \(n=32\) excels across settings. This phenomenon arises because RSPO’s estimator enumerates all \(\binom{n-1}{k-1}\) subsets: when \(n=k\), only one subset exists, forcing each gradient weight to be either \(k\) or \(0\) with bad flexibility. Consequently, we recommend using at least \(n=2k\), and choosing \(n\) as large as computationally feasible.

\subsection{Robustness}
\subsubsection{Alternative Model Size} 
We further test our algorithm on Qwen2.5-Math-7B, a larger base model than the default setting. Figure~\ref{fig:exp_4} shows that the findings in Section~\ref{sec:exp_main} also hold in the 7B setting, demonstrating the robustness of our algorithm.

\vspace{-4mm}
\begin{figure}[h]
\centering
\includegraphics[width=0.44\textwidth]{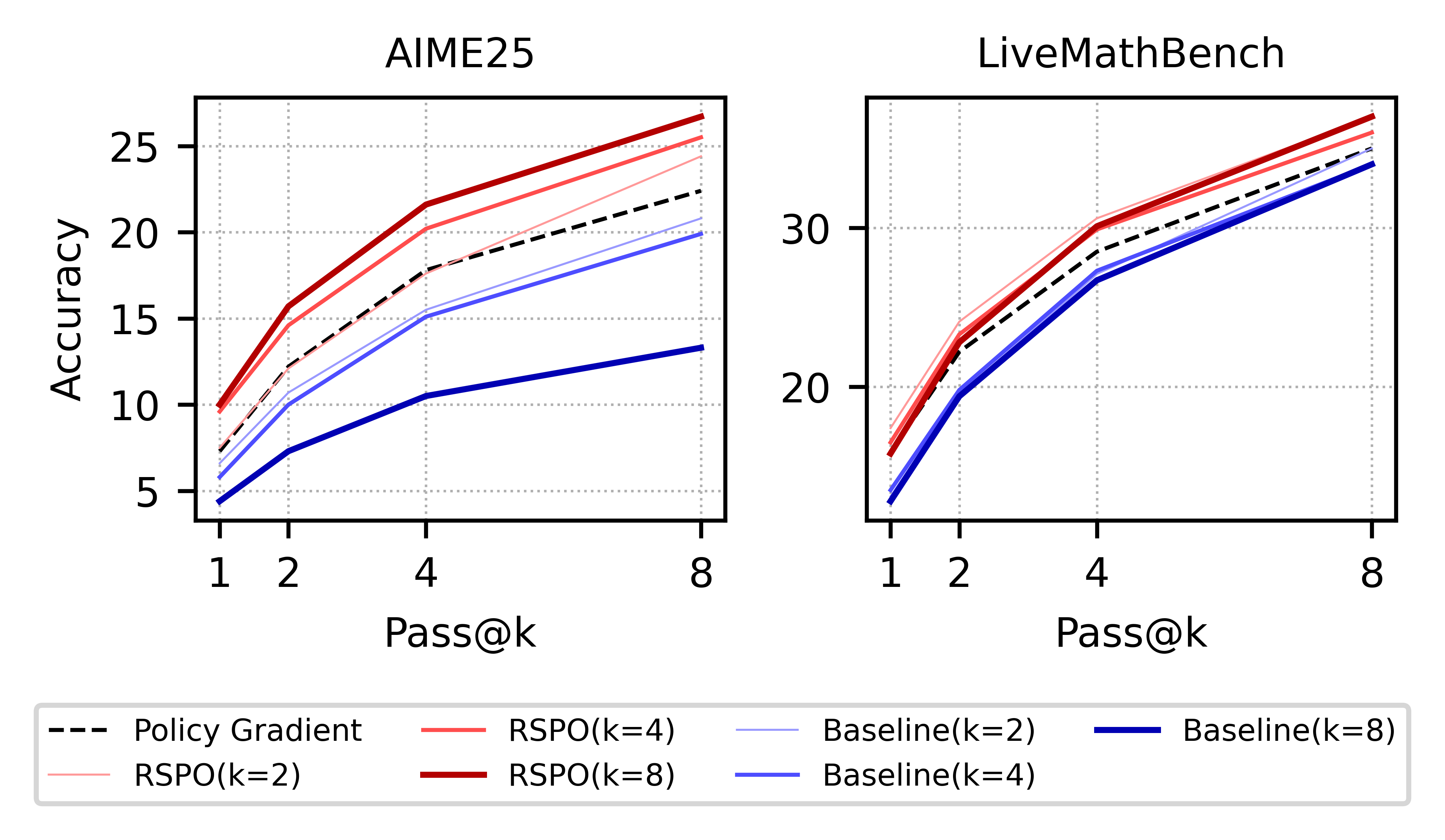}
\vspace*{-5mm}
\caption{Model accuracy on up-to-date datasets.}
\label{fig:exp_5}
\vspace{-3mm}
\end{figure}

\subsubsection{Alternative Datasets}
Recent study \cite{wu2025reasoning} indicates that the AIME2024, AMC, and Math500 datasets were likely included in Qwen2.5’s training data, which could bias experimental outcomes. For example, spurious reward signals may inadvertently boost measured accuracy \cite{shao2025spurious}. To guard against this contamination, we have re‑evaluated our results on two additional datasets—AIME2025 \cite{opencompass_aime2025} and LiveMathBench(202505) \cite{liu2024your}—both of which were created after Qwen2.5 and have been verified to be free of overlap with training corpus \cite{wu2025reasoning}. As shown in Figure~\ref{fig:exp_5}, our experimental results remain robust against potential training data contamination.

\subsection{Max@k Evaluation} \label{sec:exp_maxk}
To evaluate the Max@k metric in the context of math reasoning, we design the following reward: $R(x,y)=1-0.5*len(y)/max\_len$ if $y$ is correct, otherwise $R(x,y)=0$. This formulation offers two advantages. First, it avoids the proxy reward problem \cite{skalse2022defining} commonly encountered in RLHF. Second, the reward function is inherently meaningful, as it incentivizes shorter responses, thus lowering inference costs. The Max@k results, presented in Figure~\ref{fig:exp_6} in Appendix, follow trends similar to those observed with Pass@k, further validating the effectiveness of RSPO under the Max@k metric.

\section{Conclusion}

In this paper, we address the mismatch problem between training and evaluation objectives in LLM post-training. We propose RSPO, a method that decouples individual responses from response sets to avoid inefficiencies arising from co-occurrence. We construct unbiased estimators for both Pass@k and Max@k, and support RSPO with clear theoretical analysis and empirical validation.

\clearpage
\bibliographystyle{ACM-Reference-Format}
\bibliography{reference}


\begin{thebibliography}{35}


\ifx \showCODEN    \undefined \def \showCODEN     #1{\unskip}     \fi
\ifx \showISBNx    \undefined \def \showISBNx     #1{\unskip}     \fi
\ifx \showISBNxiii \undefined \def \showISBNxiii  #1{\unskip}     \fi
\ifx \showISSN     \undefined \def \showISSN      #1{\unskip}     \fi
\ifx \showLCCN     \undefined \def \showLCCN      #1{\unskip}     \fi
\ifx \shownote     \undefined \def \shownote      #1{#1}          \fi
\ifx \showarticletitle \undefined \def \showarticletitle #1{#1}   \fi
\ifx \showURL      \undefined \def \showURL       {\relax}        \fi
\providecommand\bibfield[2]{#2}
\providecommand\bibinfo[2]{#2}
\providecommand\natexlab[1]{#1}
\providecommand\showeprint[2][]{arXiv:#2}

\bibitem[Abdolmaleki et~al\mbox{.}({[n.\,d.]})]%
        {abdolmaleki2025learning}
\bibfield{author}{\bibinfo{person}{Abbas Abdolmaleki}, \bibinfo{person}{Bilal Piot}, \bibinfo{person}{Bobak Shahriari}, \bibinfo{person}{Jost~Tobias Springenberg}, \bibinfo{person}{Tim Hertweck}, \bibinfo{person}{Michael Bloesch}, \bibinfo{person}{Rishabh Joshi}, \bibinfo{person}{Thomas Lampe}, \bibinfo{person}{Junhyuk Oh}, \bibinfo{person}{Nicolas Heess}, {et~al\mbox{.}}} \bibinfo{year}{[n.\,d.]}\natexlab{}.
\newblock \showarticletitle{Learning from negative feedback, or positive feedback or both}. In \bibinfo{booktitle}{\emph{The Thirteenth International Conference on Learning Representations}}.
\newblock


\bibitem[Bai et~al\mbox{.}(2022)]%
        {bai2022training}
\bibfield{author}{\bibinfo{person}{Yuntao Bai}, \bibinfo{person}{Andy Jones}, \bibinfo{person}{Kamal Ndousse}, \bibinfo{person}{Amanda Askell}, \bibinfo{person}{Anna Chen}, \bibinfo{person}{Nova DasSarma}, \bibinfo{person}{Dawn Drain}, \bibinfo{person}{Stanislav Fort}, \bibinfo{person}{Deep Ganguli}, \bibinfo{person}{Tom Henighan}, {et~al\mbox{.}}} \bibinfo{year}{2022}\natexlab{}.
\newblock \showarticletitle{Training a helpful and harmless assistant with reinforcement learning from human feedback}.
\newblock \bibinfo{journal}{\emph{arXiv preprint arXiv:2204.05862}} (\bibinfo{year}{2022}).
\newblock


\bibitem[Chen et~al\mbox{.}(2025)]%
        {chen2025xverifyefficientanswerverifier}
\bibfield{author}{\bibinfo{person}{Ding Chen}, \bibinfo{person}{Qingchen Yu}, \bibinfo{person}{Pengyuan Wang}, \bibinfo{person}{Wentao Zhang}, \bibinfo{person}{Bo Tang}, \bibinfo{person}{Feiyu Xiong}, \bibinfo{person}{Xinchi Li}, \bibinfo{person}{Minchuan Yang}, {and} \bibinfo{person}{Zhiyu Li}.} \bibinfo{year}{2025}\natexlab{}.
\newblock \bibinfo{title}{xVerify: Efficient Answer Verifier for Reasoning Model Evaluations}.
\newblock
\showeprint[arxiv]{2504.10481}~[cs.CL]
\urldef\tempurl%
\url{https://arxiv.org/abs/2504.10481}
\showURL{%
\tempurl}


\bibitem[Chen et~al\mbox{.}(2021)]%
        {chen2021evaluating}
\bibfield{author}{\bibinfo{person}{Mark Chen}, \bibinfo{person}{Jerry Tworek}, \bibinfo{person}{Heewoo Jun}, \bibinfo{person}{Qiming Yuan}, \bibinfo{person}{Henrique Ponde De~Oliveira Pinto}, \bibinfo{person}{Jared Kaplan}, \bibinfo{person}{Harri Edwards}, \bibinfo{person}{Yuri Burda}, \bibinfo{person}{Nicholas Joseph}, \bibinfo{person}{Greg Brockman}, {et~al\mbox{.}}} \bibinfo{year}{2021}\natexlab{}.
\newblock \showarticletitle{Evaluating large language models trained on code}.
\newblock \bibinfo{journal}{\emph{arXiv preprint arXiv:2107.03374}} (\bibinfo{year}{2021}).
\newblock


\bibitem[Gui et~al\mbox{.}(2024)]%
        {gui2024bonbon}
\bibfield{author}{\bibinfo{person}{Lin Gui}, \bibinfo{person}{Cristina G{\^a}rbacea}, {and} \bibinfo{person}{Victor Veitch}.} \bibinfo{year}{2024}\natexlab{}.
\newblock \showarticletitle{Bonbon alignment for large language models and the sweetness of best-of-n sampling}.
\newblock \bibinfo{journal}{\emph{Advances in Neural Information Processing Systems}}  \bibinfo{volume}{37} (\bibinfo{year}{2024}), \bibinfo{pages}{2851--2885}.
\newblock


\bibitem[Guo et~al\mbox{.}(2025)]%
        {guo2025deepseek}
\bibfield{author}{\bibinfo{person}{Daya Guo}, \bibinfo{person}{Dejian Yang}, \bibinfo{person}{Haowei Zhang}, \bibinfo{person}{Junxiao Song}, \bibinfo{person}{Ruoyu Zhang}, \bibinfo{person}{Runxin Xu}, \bibinfo{person}{Qihao Zhu}, \bibinfo{person}{Shirong Ma}, \bibinfo{person}{Peiyi Wang}, \bibinfo{person}{Xiao Bi}, {et~al\mbox{.}}} \bibinfo{year}{2025}\natexlab{}.
\newblock \showarticletitle{Deepseek-r1: Incentivizing reasoning capability in llms via reinforcement learning}.
\newblock \bibinfo{journal}{\emph{arXiv preprint arXiv:2501.12948}} (\bibinfo{year}{2025}).
\newblock


\bibitem[He et~al\mbox{.}(2024)]%
        {he2024olympiadbench}
\bibfield{author}{\bibinfo{person}{Chaoqun He}, \bibinfo{person}{Renjie Luo}, \bibinfo{person}{Yuzhuo Bai}, \bibinfo{person}{Shengding Hu}, \bibinfo{person}{Zhen~Leng Thai}, \bibinfo{person}{Junhao Shen}, \bibinfo{person}{Jinyi Hu}, \bibinfo{person}{Xu Han}, \bibinfo{person}{Yujie Huang}, \bibinfo{person}{Yuxiang Zhang}, {et~al\mbox{.}}} \bibinfo{year}{2024}\natexlab{}.
\newblock \showarticletitle{Olympiadbench: A challenging benchmark for promoting agi with olympiad-level bilingual multimodal scientific problems}.
\newblock \bibinfo{journal}{\emph{arXiv preprint arXiv:2402.14008}} (\bibinfo{year}{2024}).
\newblock


\bibitem[Hendrycks et~al\mbox{.}(2021a)]%
        {hendrycksmath2021}
\bibfield{author}{\bibinfo{person}{Dan Hendrycks}, \bibinfo{person}{Collin Burns}, \bibinfo{person}{Saurav Kadavath}, \bibinfo{person}{Akul Arora}, \bibinfo{person}{Steven Basart}, \bibinfo{person}{Eric Tang}, \bibinfo{person}{Dawn Song}, {and} \bibinfo{person}{Jacob Steinhardt}.} \bibinfo{year}{2021}\natexlab{a}.
\newblock \showarticletitle{Measuring Mathematical Problem Solving With the MATH Dataset}.
\newblock \bibinfo{journal}{\emph{arXiv preprint arXiv:2103.03874}} (\bibinfo{year}{2021}).
\newblock


\bibitem[Hendrycks et~al\mbox{.}(2021b)]%
        {hendrycks2021measuring}
\bibfield{author}{\bibinfo{person}{Dan Hendrycks}, \bibinfo{person}{Collin Burns}, \bibinfo{person}{Saurav Kadavath}, \bibinfo{person}{Akul Arora}, \bibinfo{person}{Steven Basart}, \bibinfo{person}{Eric Tang}, \bibinfo{person}{Dawn Song}, {and} \bibinfo{person}{Jacob Steinhardt}.} \bibinfo{year}{2021}\natexlab{b}.
\newblock \showarticletitle{Measuring mathematical problem solving with the math dataset}.
\newblock \bibinfo{journal}{\emph{arXiv preprint arXiv:2103.03874}} (\bibinfo{year}{2021}).
\newblock


\bibitem[Hoeffding(1992)]%
        {hoeffding1992class}
\bibfield{author}{\bibinfo{person}{Wassily Hoeffding}.} \bibinfo{year}{1992}\natexlab{}.
\newblock \showarticletitle{A class of statistics with asymptotically normal distribution}.
\newblock \bibinfo{journal}{\emph{Breakthroughs in statistics: Foundations and basic theory}} (\bibinfo{year}{1992}), \bibinfo{pages}{308--334}.
\newblock


\bibitem[Jones(1996)]%
        {jones1996generalized}
\bibfield{author}{\bibinfo{person}{Charles~H Jones}.} \bibinfo{year}{1996}\natexlab{}.
\newblock \showarticletitle{Generalized hockey stick identities and N-dimensional blockwalking}.
\newblock \bibinfo{journal}{\emph{The Fibonacci Quarterly}} \bibinfo{volume}{34}, \bibinfo{number}{3} (\bibinfo{year}{1996}), \bibinfo{pages}{280--288}.
\newblock


\bibitem[Lewkowycz et~al\mbox{.}(2022)]%
        {lewkowycz2022solving}
\bibfield{author}{\bibinfo{person}{Aitor Lewkowycz}, \bibinfo{person}{Anders Andreassen}, \bibinfo{person}{David Dohan}, \bibinfo{person}{Ethan Dyer}, \bibinfo{person}{Henryk Michalewski}, \bibinfo{person}{Vinay Ramasesh}, \bibinfo{person}{Ambrose Slone}, \bibinfo{person}{Cem Anil}, \bibinfo{person}{Imanol Schlag}, \bibinfo{person}{Theo Gutman-Solo}, {et~al\mbox{.}}} \bibinfo{year}{2022}\natexlab{}.
\newblock \showarticletitle{Solving quantitative reasoning problems with language models}.
\newblock \bibinfo{journal}{\emph{Advances in Neural Information Processing Systems}}  \bibinfo{volume}{35} (\bibinfo{year}{2022}), \bibinfo{pages}{3843--3857}.
\newblock


\bibitem[Li et~al\mbox{.}(2024)]%
        {li2024numinamath}
\bibfield{author}{\bibinfo{person}{Jia Li}, \bibinfo{person}{Edward Beeching}, \bibinfo{person}{Lewis Tunstall}, \bibinfo{person}{Ben Lipkin}, \bibinfo{person}{Roman Soletskyi}, \bibinfo{person}{Shengyi Huang}, \bibinfo{person}{Kashif Rasul}, \bibinfo{person}{Longhui Yu}, \bibinfo{person}{Albert~Q Jiang}, \bibinfo{person}{Ziju Shen}, {et~al\mbox{.}}} \bibinfo{year}{2024}\natexlab{}.
\newblock \showarticletitle{Numinamath: The largest public dataset in ai4maths with 860k pairs of competition math problems and solutions}.
\newblock \bibinfo{journal}{\emph{Hugging Face repository}}  \bibinfo{volume}{13} (\bibinfo{year}{2024}), \bibinfo{pages}{9}.
\newblock


\bibitem[Liu et~al\mbox{.}(2024)]%
        {liu2024your}
\bibfield{author}{\bibinfo{person}{Junnan Liu}, \bibinfo{person}{Hongwei Liu}, \bibinfo{person}{Linchen Xiao}, \bibinfo{person}{Ziyi Wang}, \bibinfo{person}{Kuikun Liu}, \bibinfo{person}{Songyang Gao}, \bibinfo{person}{Wenwei Zhang}, \bibinfo{person}{Songyang Zhang}, {and} \bibinfo{person}{Kai Chen}.} \bibinfo{year}{2024}\natexlab{}.
\newblock \showarticletitle{Are Your LLMs Capable of Stable Reasoning?}
\newblock \bibinfo{journal}{\emph{arXiv preprint arXiv:2412.13147}} (\bibinfo{year}{2024}).
\newblock


\bibitem[Minaee et~al\mbox{.}(2025)]%
        {minaee2025largelanguagemodelssurvey}
\bibfield{author}{\bibinfo{person}{Shervin Minaee}, \bibinfo{person}{Tomas Mikolov}, \bibinfo{person}{Narjes Nikzad}, \bibinfo{person}{Meysam Chenaghlu}, \bibinfo{person}{Richard Socher}, \bibinfo{person}{Xavier Amatriain}, {and} \bibinfo{person}{Jianfeng Gao}.} \bibinfo{year}{2025}\natexlab{}.
\newblock \bibinfo{title}{Large Language Models: A Survey}.
\newblock
\showeprint[arxiv]{2402.06196}~[cs.CL]
\urldef\tempurl%
\url{https://arxiv.org/abs/2402.06196}
\showURL{%
\tempurl}


\bibitem[{opencompass}(2025)]%
        {opencompass_aime2025}
\bibfield{author}{\bibinfo{person}{{opencompass}}.} \bibinfo{year}{2025}\natexlab{}.
\newblock \bibinfo{title}{{AIME2025}: Problems from the American Invitational Mathematics Examination (2025) – I \& II}.
\newblock \bibinfo{howpublished}{\url{https://huggingface.co/datasets/opencompass/AIME2025}}.
\newblock
\newblock
\shownote{Dataset, accessed 22 July 2025}.


\bibitem[Ouyang et~al\mbox{.}(2022)]%
        {ouyang2022training}
\bibfield{author}{\bibinfo{person}{Long Ouyang}, \bibinfo{person}{Jeffrey Wu}, \bibinfo{person}{Xu Jiang}, \bibinfo{person}{Diogo Almeida}, \bibinfo{person}{Carroll Wainwright}, \bibinfo{person}{Pamela Mishkin}, \bibinfo{person}{Chong Zhang}, \bibinfo{person}{Sandhini Agarwal}, \bibinfo{person}{Katarina Slama}, \bibinfo{person}{Alex Ray}, {et~al\mbox{.}}} \bibinfo{year}{2022}\natexlab{}.
\newblock \showarticletitle{Training language models to follow instructions with human feedback}.
\newblock \bibinfo{journal}{\emph{Advances in neural information processing systems}}  \bibinfo{volume}{35} (\bibinfo{year}{2022}), \bibinfo{pages}{27730--27744}.
\newblock


\bibitem[Schulman et~al\mbox{.}(2015)]%
        {pmlr-v37-schulman15}
\bibfield{author}{\bibinfo{person}{John Schulman}, \bibinfo{person}{Sergey Levine}, \bibinfo{person}{Pieter Abbeel}, \bibinfo{person}{Michael Jordan}, {and} \bibinfo{person}{Philipp Moritz}.} \bibinfo{year}{2015}\natexlab{}.
\newblock \showarticletitle{Trust Region Policy Optimization}. In \bibinfo{booktitle}{\emph{Proceedings of the 32nd International Conference on Machine Learning}} \emph{(\bibinfo{series}{Proceedings of Machine Learning Research}, Vol.~\bibinfo{volume}{37})}, \bibfield{editor}{\bibinfo{person}{Francis Bach} {and} \bibinfo{person}{David Blei}} (Eds.). \bibinfo{publisher}{PMLR}, \bibinfo{address}{Lille, France}, \bibinfo{pages}{1889--1897}.
\newblock
\urldef\tempurl%
\url{https://proceedings.mlr.press/v37/schulman15.html}
\showURL{%
\tempurl}


\bibitem[Schulman et~al\mbox{.}(2017)]%
        {schulman2017proximal}
\bibfield{author}{\bibinfo{person}{John Schulman}, \bibinfo{person}{Filip Wolski}, \bibinfo{person}{Prafulla Dhariwal}, \bibinfo{person}{Alec Radford}, {and} \bibinfo{person}{Oleg Klimov}.} \bibinfo{year}{2017}\natexlab{}.
\newblock \showarticletitle{Proximal policy optimization algorithms}.
\newblock \bibinfo{journal}{\emph{arXiv preprint arXiv:1707.06347}} (\bibinfo{year}{2017}).
\newblock


\bibitem[Sessa et~al\mbox{.}(2024)]%
        {sessa2024bond}
\bibfield{author}{\bibinfo{person}{Pier~Giuseppe Sessa}, \bibinfo{person}{Robert Dadashi}, \bibinfo{person}{L{\'e}onard Hussenot}, \bibinfo{person}{Johan Ferret}, \bibinfo{person}{Nino Vieillard}, \bibinfo{person}{Alexandre Ram{\'e}}, \bibinfo{person}{Bobak Shariari}, \bibinfo{person}{Sarah Perrin}, \bibinfo{person}{Abe Friesen}, \bibinfo{person}{Geoffrey Cideron}, {et~al\mbox{.}}} \bibinfo{year}{2024}\natexlab{}.
\newblock \showarticletitle{Bond: Aligning llms with best-of-n distillation}.
\newblock \bibinfo{journal}{\emph{arXiv preprint arXiv:2407.14622}} (\bibinfo{year}{2024}).
\newblock


\bibitem[Shao et~al\mbox{.}(2025)]%
        {shao2025spurious}
\bibfield{author}{\bibinfo{person}{Rulin Shao}, \bibinfo{person}{Shuyue~Stella Li}, \bibinfo{person}{Rui Xin}, \bibinfo{person}{Scott Geng}, \bibinfo{person}{Yiping Wang}, \bibinfo{person}{Sewoong Oh}, \bibinfo{person}{Simon~Shaolei Du}, \bibinfo{person}{Nathan Lambert}, \bibinfo{person}{Sewon Min}, \bibinfo{person}{Ranjay Krishna}, {et~al\mbox{.}}} \bibinfo{year}{2025}\natexlab{}.
\newblock \showarticletitle{Spurious rewards: Rethinking training signals in rlvr}.
\newblock \bibinfo{journal}{\emph{arXiv preprint arXiv:2506.10947}} (\bibinfo{year}{2025}).
\newblock


\bibitem[Shao et~al\mbox{.}(2024)]%
        {shao2024deepseekmath}
\bibfield{author}{\bibinfo{person}{Zhihong Shao}, \bibinfo{person}{Peiyi Wang}, \bibinfo{person}{Qihao Zhu}, \bibinfo{person}{Runxin Xu}, \bibinfo{person}{Junxiao Song}, \bibinfo{person}{Xiao Bi}, \bibinfo{person}{Haowei Zhang}, \bibinfo{person}{Mingchuan Zhang}, \bibinfo{person}{YK Li}, \bibinfo{person}{Y Wu}, {et~al\mbox{.}}} \bibinfo{year}{2024}\natexlab{}.
\newblock \showarticletitle{Deepseekmath: Pushing the limits of mathematical reasoning in open language models}.
\newblock \bibinfo{journal}{\emph{arXiv preprint arXiv:2402.03300}} (\bibinfo{year}{2024}).
\newblock


\bibitem[Sheng et~al\mbox{.}(2024)]%
        {sheng2024hybridflow}
\bibfield{author}{\bibinfo{person}{Guangming Sheng}, \bibinfo{person}{Chi Zhang}, \bibinfo{person}{Zilingfeng Ye}, \bibinfo{person}{Xibin Wu}, \bibinfo{person}{Wang Zhang}, \bibinfo{person}{Ru Zhang}, \bibinfo{person}{Yanghua Peng}, \bibinfo{person}{Haibin Lin}, {and} \bibinfo{person}{Chuan Wu}.} \bibinfo{year}{2024}\natexlab{}.
\newblock \showarticletitle{HybridFlow: A Flexible and Efficient RLHF Framework}.
\newblock \bibinfo{journal}{\emph{arXiv preprint arXiv: 2409.19256}} (\bibinfo{year}{2024}).
\newblock


\bibitem[Skalse et~al\mbox{.}(2022)]%
        {skalse2022defining}
\bibfield{author}{\bibinfo{person}{Joar Skalse}, \bibinfo{person}{Nikolaus Howe}, \bibinfo{person}{Dmitrii Krasheninnikov}, {and} \bibinfo{person}{David Krueger}.} \bibinfo{year}{2022}\natexlab{}.
\newblock \showarticletitle{Defining and characterizing reward gaming}.
\newblock \bibinfo{journal}{\emph{Advances in Neural Information Processing Systems}}  \bibinfo{volume}{35} (\bibinfo{year}{2022}), \bibinfo{pages}{9460--9471}.
\newblock


\bibitem[Stiennon et~al\mbox{.}(2020)]%
        {stiennon2020learning}
\bibfield{author}{\bibinfo{person}{Nisan Stiennon}, \bibinfo{person}{Long Ouyang}, \bibinfo{person}{Jeffrey Wu}, \bibinfo{person}{Daniel Ziegler}, \bibinfo{person}{Ryan Lowe}, \bibinfo{person}{Chelsea Voss}, \bibinfo{person}{Alec Radford}, \bibinfo{person}{Dario Amodei}, {and} \bibinfo{person}{Paul~F Christiano}.} \bibinfo{year}{2020}\natexlab{}.
\newblock \showarticletitle{Learning to summarize with human feedback}.
\newblock \bibinfo{journal}{\emph{Advances in neural information processing systems}}  \bibinfo{volume}{33} (\bibinfo{year}{2020}), \bibinfo{pages}{3008--3021}.
\newblock


\bibitem[Sutton et~al\mbox{.}(1998)]%
        {sutton1998reinforcement}
\bibfield{author}{\bibinfo{person}{Richard~S Sutton}, \bibinfo{person}{Andrew~G Barto}, {et~al\mbox{.}}} \bibinfo{year}{1998}\natexlab{}.
\newblock \bibinfo{booktitle}{\emph{Reinforcement learning: An introduction}}. Vol.~\bibinfo{volume}{1}.
\newblock \bibinfo{publisher}{MIT press Cambridge}.
\newblock


\bibitem[Sutton et~al\mbox{.}(1999)]%
        {sutton1999policy}
\bibfield{author}{\bibinfo{person}{Richard~S Sutton}, \bibinfo{person}{David McAllester}, \bibinfo{person}{Satinder Singh}, {and} \bibinfo{person}{Yishay Mansour}.} \bibinfo{year}{1999}\natexlab{}.
\newblock \showarticletitle{Policy gradient methods for reinforcement learning with function approximation}.
\newblock \bibinfo{journal}{\emph{Advances in neural information processing systems}}  \bibinfo{volume}{12} (\bibinfo{year}{1999}).
\newblock


\bibitem[Tie et~al\mbox{.}(2025)]%
        {tie2025surveyposttraininglargelanguage}
\bibfield{author}{\bibinfo{person}{Guiyao Tie}, \bibinfo{person}{Zeli Zhao}, \bibinfo{person}{Dingjie Song}, \bibinfo{person}{Fuyang Wei}, \bibinfo{person}{Rong Zhou}, \bibinfo{person}{Yurou Dai}, \bibinfo{person}{Wen Yin}, \bibinfo{person}{Zhejian Yang}, \bibinfo{person}{Jiangyue Yan}, \bibinfo{person}{Yao Su}, \bibinfo{person}{Zhenhan Dai}, \bibinfo{person}{Yifeng Xie}, \bibinfo{person}{Yihan Cao}, \bibinfo{person}{Lichao Sun}, \bibinfo{person}{Pan Zhou}, \bibinfo{person}{Lifang He}, \bibinfo{person}{Hechang Chen}, \bibinfo{person}{Yu Zhang}, \bibinfo{person}{Qingsong Wen}, \bibinfo{person}{Tianming Liu}, \bibinfo{person}{Neil~Zhenqiang Gong}, \bibinfo{person}{Jiliang Tang}, \bibinfo{person}{Caiming Xiong}, \bibinfo{person}{Heng Ji}, \bibinfo{person}{Philip~S. Yu}, {and} \bibinfo{person}{Jianfeng Gao}.} \bibinfo{year}{2025}\natexlab{}.
\newblock \bibinfo{title}{A Survey on Post-training of Large Language Models}.
\newblock
\showeprint[arxiv]{2503.06072}~[cs.CL]
\urldef\tempurl%
\url{https://arxiv.org/abs/2503.06072}
\showURL{%
\tempurl}


\bibitem[Vaswani et~al\mbox{.}(2017)]%
        {vaswani2017attention}
\bibfield{author}{\bibinfo{person}{Ashish Vaswani}, \bibinfo{person}{Noam Shazeer}, \bibinfo{person}{Niki Parmar}, \bibinfo{person}{Jakob Uszkoreit}, \bibinfo{person}{Llion Jones}, \bibinfo{person}{Aidan~N Gomez}, \bibinfo{person}{{\L}ukasz Kaiser}, {and} \bibinfo{person}{Illia Polosukhin}.} \bibinfo{year}{2017}\natexlab{}.
\newblock \showarticletitle{Attention is all you need}.
\newblock \bibinfo{journal}{\emph{Advances in neural information processing systems}}  \bibinfo{volume}{30} (\bibinfo{year}{2017}).
\newblock


\bibitem[Wang et~al\mbox{.}(2025)]%
        {wang2025survey}
\bibfield{author}{\bibinfo{person}{Peng-Yuan Wang}, \bibinfo{person}{Tian-Shuo Liu}, \bibinfo{person}{Chenyang Wang}, \bibinfo{person}{Yi-Di Wang}, \bibinfo{person}{Shu Yan}, \bibinfo{person}{Cheng-Xing Jia}, \bibinfo{person}{Xu-Hui Liu}, \bibinfo{person}{Xin-Wei Chen}, \bibinfo{person}{Jia-Cheng Xu}, \bibinfo{person}{Ziniu Li}, {et~al\mbox{.}}} \bibinfo{year}{2025}\natexlab{}.
\newblock \showarticletitle{A Survey on Large Language Models for Mathematical Reasoning}.
\newblock \bibinfo{journal}{\emph{arXiv preprint arXiv:2506.08446}} (\bibinfo{year}{2025}).
\newblock


\bibitem[Wu et~al\mbox{.}(2025)]%
        {wu2025reasoning}
\bibfield{author}{\bibinfo{person}{Mingqi Wu}, \bibinfo{person}{Zhihao Zhang}, \bibinfo{person}{Qiaole Dong}, \bibinfo{person}{Zhiheng Xi}, \bibinfo{person}{Jun Zhao}, \bibinfo{person}{Senjie Jin}, \bibinfo{person}{Xiaoran Fan}, \bibinfo{person}{Yuhao Zhou}, \bibinfo{person}{Yanwei Fu}, \bibinfo{person}{Qin Liu}, {et~al\mbox{.}}} \bibinfo{year}{2025}\natexlab{}.
\newblock \showarticletitle{Reasoning or Memorization? Unreliable Results of Reinforcement Learning Due to Data Contamination}.
\newblock \bibinfo{journal}{\emph{arXiv preprint arXiv:2507.10532}} (\bibinfo{year}{2025}).
\newblock


\bibitem[Yang et~al\mbox{.}(2024)]%
        {yang2024qwen25mathtechnicalreportmathematical}
\bibfield{author}{\bibinfo{person}{An Yang}, \bibinfo{person}{Beichen Zhang}, \bibinfo{person}{Binyuan Hui}, \bibinfo{person}{Bofei Gao}, \bibinfo{person}{Bowen Yu}, \bibinfo{person}{Chengpeng Li}, \bibinfo{person}{Dayiheng Liu}, \bibinfo{person}{Jianhong Tu}, \bibinfo{person}{Jingren Zhou}, \bibinfo{person}{Junyang Lin}, \bibinfo{person}{Keming Lu}, \bibinfo{person}{Mingfeng Xue}, \bibinfo{person}{Runji Lin}, \bibinfo{person}{Tianyu Liu}, \bibinfo{person}{Xingzhang Ren}, {and} \bibinfo{person}{Zhenru Zhang}.} \bibinfo{year}{2024}\natexlab{}.
\newblock \showarticletitle{Qwen2.5-Math Technical Report: Toward Mathematical Expert Model via Self-Improvement}.
\newblock \bibinfo{journal}{\emph{arXiv preprint arXiv:2409.12122}} (\bibinfo{year}{2024}).
\newblock


\bibitem[Yue et~al\mbox{.}(2025)]%
        {yue2025does}
\bibfield{author}{\bibinfo{person}{Yang Yue}, \bibinfo{person}{Zhiqi Chen}, \bibinfo{person}{Rui Lu}, \bibinfo{person}{Andrew Zhao}, \bibinfo{person}{Zhaokai Wang}, \bibinfo{person}{Shiji Song}, {and} \bibinfo{person}{Gao Huang}.} \bibinfo{year}{2025}\natexlab{}.
\newblock \showarticletitle{Does reinforcement learning really incentivize reasoning capacity in llms beyond the base model?}
\newblock \bibinfo{journal}{\emph{arXiv preprint arXiv:2504.13837}} (\bibinfo{year}{2025}).
\newblock


\bibitem[Zhao et~al\mbox{.}(2025)]%
        {zhao2025surveylargelanguagemodels}
\bibfield{author}{\bibinfo{person}{Wayne~Xin Zhao}, \bibinfo{person}{Kun Zhou}, \bibinfo{person}{Junyi Li}, \bibinfo{person}{Tianyi Tang}, \bibinfo{person}{Xiaolei Wang}, \bibinfo{person}{Yupeng Hou}, \bibinfo{person}{Yingqian Min}, \bibinfo{person}{Beichen Zhang}, \bibinfo{person}{Junjie Zhang}, \bibinfo{person}{Zican Dong}, \bibinfo{person}{Yifan Du}, \bibinfo{person}{Chen Yang}, \bibinfo{person}{Yushuo Chen}, \bibinfo{person}{Zhipeng Chen}, \bibinfo{person}{Jinhao Jiang}, \bibinfo{person}{Ruiyang Ren}, \bibinfo{person}{Yifan Li}, \bibinfo{person}{Xinyu Tang}, \bibinfo{person}{Zikang Liu}, \bibinfo{person}{Peiyu Liu}, \bibinfo{person}{Jian-Yun Nie}, {and} \bibinfo{person}{Ji-Rong Wen}.} \bibinfo{year}{2025}\natexlab{}.
\newblock \bibinfo{title}{A Survey of Large Language Models}.
\newblock
\showeprint[arxiv]{2303.18223}~[cs.CL]
\urldef\tempurl%
\url{https://arxiv.org/abs/2303.18223}
\showURL{%
\tempurl}


\bibitem[Zhou et~al\mbox{.}(2023)]%
        {zhou2023comprehensivesurveypretrainedfoundation}
\bibfield{author}{\bibinfo{person}{Ce Zhou}, \bibinfo{person}{Qian Li}, \bibinfo{person}{Chen Li}, \bibinfo{person}{Jun Yu}, \bibinfo{person}{Yixin Liu}, \bibinfo{person}{Guangjing Wang}, \bibinfo{person}{Kai Zhang}, \bibinfo{person}{Cheng Ji}, \bibinfo{person}{Qiben Yan}, \bibinfo{person}{Lifang He}, \bibinfo{person}{Hao Peng}, \bibinfo{person}{Jianxin Li}, \bibinfo{person}{Jia Wu}, \bibinfo{person}{Ziwei Liu}, \bibinfo{person}{Pengtao Xie}, \bibinfo{person}{Caiming Xiong}, \bibinfo{person}{Jian Pei}, \bibinfo{person}{Philip~S. Yu}, {and} \bibinfo{person}{Lichao Sun}.} \bibinfo{year}{2023}\natexlab{}.
\newblock \bibinfo{title}{A Comprehensive Survey on Pretrained Foundation Models: A History from BERT to ChatGPT}.
\newblock
\showeprint[arxiv]{2302.09419}~[cs.AI]
\urldef\tempurl%
\url{https://arxiv.org/abs/2302.09419}
\showURL{%
\tempurl}


\end{thebibliography}

\appendix
\clearpage
\begin{figure*}[t]
\centering
\includegraphics[width=\textwidth]{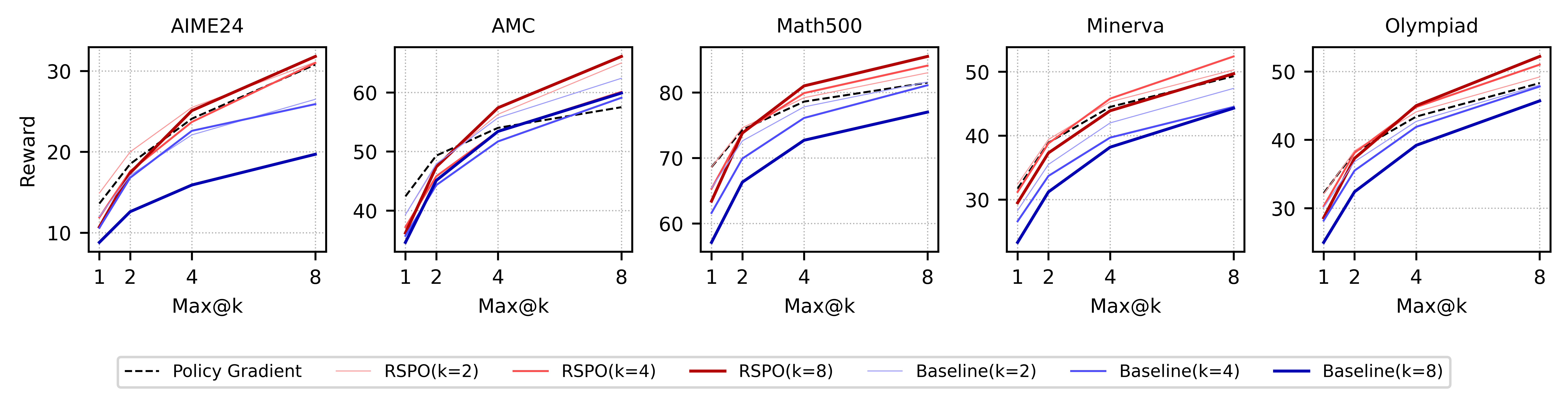}
\vspace*{-7mm}
\caption{Model comparison across five math benchmarks at different Max@k values.}
\label{fig:exp_6}
\vspace{-3mm}
\end{figure*}

\begin{figure*}[t]
\centering
\includegraphics[width=\textwidth]{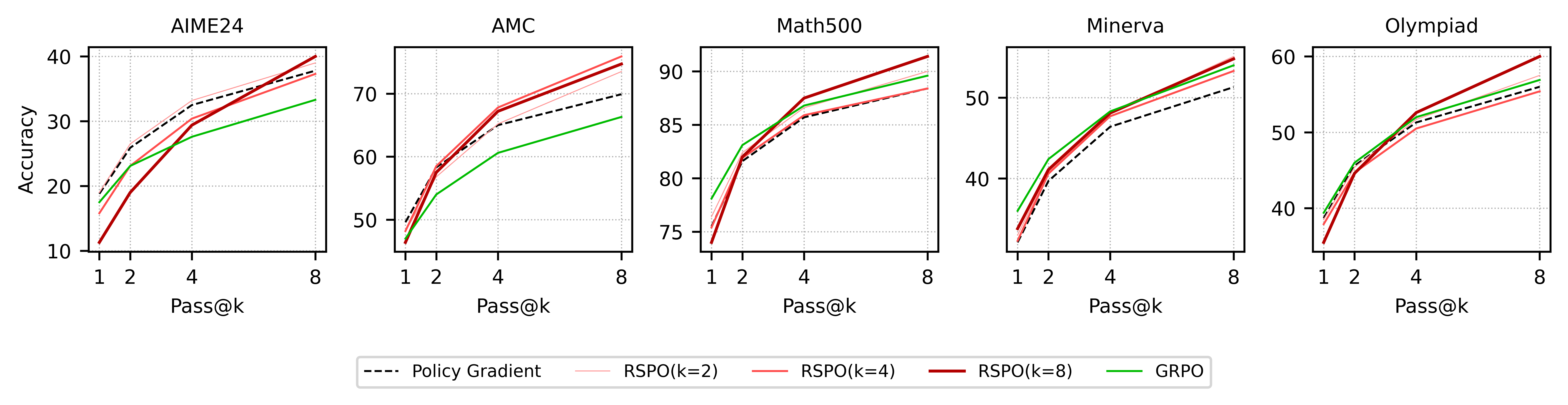}
\vspace*{-7mm}
\caption{Model comparison with GRPO across five math benchmarks at different Pass@k values.}
\label{fig:exp_7}
\vspace{-3mm}
\end{figure*}

\section*{Appendix}
\section{Experiment Setting Justifications} \label{app:justification}

In this appendix, we provide a justification for our experimental setup and address potential concerns that reviewers may raise.

\textbf{1. Why are the experiment figures dense? Why not use tables instead?} 

There are two reasons for this choice. First, tables would occupy significantly more space than figures. A table would likely span an entire page, whereas a figure is more compact. Second, detecting trends in tables can be challenging. For example, the relationship between different values of \( k \) may be difficult to discern. In contrast, figures provide a clearer visual representation of these trends.

\textbf{2. Why don't you compare RSPO with other popular RL algorithms?} 

We justify this experimental setup both theoretically and empirically. From a theoretical perspective, existing RL algorithms for LLM post-training generally optimize the risk-neutral objective \( \mathop{\mathbb{E}}_{x \sim \mathcal{D}, y \sim \pi_\theta(y|x)}R(x,y) \). In contrast, our focus is on the Pass@k and Max@k metrics, where the objective mismatch would lead to suboptimal performance of the existing RL algorithms in evaluation.

Empirically, we tested the performance of the widely-used GRPO \cite{shao2024deepseekmath} in Figure~\ref{fig:exp_7} and observed that it exhibits performance and trends similar to those of the Policy Gradient method. This finding suggests that Policy Gradient serves as a reasonable representative baseline for algorithms that optimize the risk-neutral objective.

\textbf{3. Why don't you test Max@k on alignment/RLHF tasks?} 

LLM post-training tasks generally fall into two categories: reinforcement learning with verifiable rewards (RLVR) and reinforcement learning from human feedback (RLHF). In the RLHF setting, the effectiveness of RL algorithms depends on how accurately the proxy rewards align with the true rewards. This \textit{proxy reward issue} \cite{skalse2022defining} is a well-documented challenge in reinforcement learning. As a result, it is common practice \cite{shao2024deepseekmath, wu2025reasoning, yue2025does} to evaluate post-training algorithms in the RLVR setting, where rewards are verifiable and more directly related to performance.

\section{Proofs}
\subsection{Proof of Theorem~\ref{theorem:passk}} \label{app:proof_passk}
\begin{proof}
    Consider using $y^i$ to estimate 
    \[
    \mathop{\mathbb{E}}\limits_{y\sim\pi_\theta(y|x)}R(x,y) \nabla_\theta\log\pi_\theta(y|x)
    \]
    and $\textbf{y}^{-i}=\{y^j|1\leq j\leq n,j\neq i\}$ to estimate $(1-w_\theta)^{k-1}$.
    
    Obviously, $ R(x,y^i) \nabla_\theta\log\pi_\theta(y^i|x)$ is an unbiased estimator of $\mathop{\mathbb{E}}\limits_{y\sim\pi_\theta(y|x)}R(x,y) \nabla_\theta\log\pi_\theta(y|x)$, because 
    \begin{equation}
        \begin{aligned}
            &\mathop{\mathbb{E}}\limits_{y^i\sim\pi_\theta(y|x)}R(x,y^i) \nabla_\theta\log\pi_\theta(y^i|x) \\
            =&\mathop{\mathbb{E}}\limits_{y\sim\pi_\theta(y|x)}R(x,y) \nabla_\theta\log\pi_\theta(y|x)
        \end{aligned}
    \end{equation}

    Next we prove that $\frac{\tbinom{n-1-c^{-i}}{k-1}}{\tbinom{n-1}{k-1}}$, with $c^{-i}=\sum_{y^j \in \textbf{y}^{-i}} R(x,y^j)$, is an unbiased estimator of $(1-w_\theta)^{k-1}$:

    For a size $k-1$ subset $s$ of $\textbf{y}^{-i}$ ($|s|=k-1,s\subseteq\textbf{y}^{-i}$), we denote 
    \[
I(s)= \begin{cases}
    1 & R(x,y^j)=0,\forall y^j\in s \\
    0& otherwise
\end{cases}
\]
We have
\begin{equation}
    \begin{aligned}
    \mathop{\mathbb{E}}(I(s))=\mathop{\mathbb{E}}(\prod_{y^j\in s}(1-R(x,y^j))^{k-1})=(1-w_\theta)^{k-1}
    \end{aligned}
\end{equation}
Then
\begin{equation}
    \begin{aligned}
        &\mathop{\mathbb{E}}(\frac{\tbinom{n-1-c^{-i}}{k-1}}{\tbinom{n-1}{k-1}})
        =\frac{1}{\tbinom{n-1}{k-1}}\mathop{\mathbb{E}}(\sum_{|s|=k-1,s\subseteq\textbf{y}^{-i}}I(s)) \\
        =&\frac{1}{\tbinom{n-1}{k-1}}\sum_{|s|=k-1,s\subseteq\textbf{y}^{-i}}\mathop{\mathbb{E}}(I(s))=\frac{1}{\tbinom{n-1}{k-1}}\sum_{|s|=k-1,s\subseteq\textbf{y}^{-i}}(1-w_\theta)^{k-1} \\
        =&\frac{1}{\tbinom{n-1}{k-1}}\tbinom{n-1}{k-1}(1-w_\theta)^{k-1}=(1-w_\theta)^{k-1}
    \end{aligned}
\end{equation}

As ${y^i}$ are independent of $\textbf{y}^{-i}$, 
\begin{equation}
    \begin{aligned}
        &\mathop{\mathbb{E}}(\frac{\tbinom{n-1-c^{-i}}{k-1}}{\tbinom{n-1}{k-1}}\cdot R(x,y^i) \nabla_\theta\log\pi_\theta(y^i|x)) \\
        =&\mathop{\mathbb{E}}(\frac{\tbinom{n-1-c^{-i}}{k-1}}{\tbinom{n-1}{k-1}})\cdot \mathop{\mathbb{E}}(R(x,y^i) \nabla_\theta\log\pi_\theta(y^i|x)) \\
        =&(1-w_\theta)^{k-1} \mathop{\mathbb{E}}(R(x,y^i) \nabla_\theta\log\pi_\theta(y^i|x) \\
        =&\mathop{\mathbb{E}}((1-w_\theta)^{k-1}R(x,y^i) \nabla_\theta\log\pi_\theta(y^i|x)
    \end{aligned}
\end{equation}
As a result, $\frac{\tbinom{n-1-c^{-i}}{k-1}}{\tbinom{n-1}{k-1}}\cdot R(x,y^i) \nabla_\theta\log\pi_\theta(y^i|x)$ is an unbiased estimator of original Pass@k objective.

Because the above proof holds for every $i$,
\begin{equation}
    \begin{aligned}
        &\frac{1}{n}\sum_{i=1}^n \frac{\tbinom{n-1-c^{-i}}{k-1}}{\tbinom{n-1}{k-1}}\cdot R(x,y^i) \nabla_\theta\log\pi_\theta(y^i|x) \\
        =&\frac{\tbinom{n-c}{k-1}}{\tbinom{n-1}{k-1}}
    \frac{1}{n}\sum_{i=1}^n 
    R(x,y^i)  \nabla_\theta\log\pi_\theta(y^i|x)    
    \end{aligned}
\end{equation} 
is also an unbiased estimator. The theorem has been proofed. \qedhere

\end{proof}
\subsection{Proof of Theorem~\ref{theorem:maxk}} \label{app:proof_maxk}
\begin{proof}
For a subset $s$ of $\{y^i\}_{i=1}^n$, we denote
    \[
I(s|y)= \begin{cases}
    1 & R(x,y')\leq R(x,y),\forall y'\in s \\
    0& otherwise
\end{cases}
\]

We first prove:
\begin{equation}
    \begin{aligned}
        &\mathbb{E}[\frac{\binom{i-1}{k-1}}{\binom{n-1}{k-1}}|y^i] 
        =\mathbb{E}[\frac{1}{\binom{n-1}{k-1}}\sum_{\substack{|s|=k-1,\\s\subseteq\{y^j|1\leq j\leq n,j\neq i\}}}I(s|y^i)|y^i] \\
        =& \frac{1}{\binom{n-1}{k-1}}\sum_{\substack{|s|=k-1,\\s\subseteq\{y^j|1\leq j\leq n,j\neq i\}}}\mathbb{E}[I(s|y^i)|y^i] \\
        =& \frac{1}{\binom{n-1}{k-1}}\sum_{\substack{|s|=k-1,\\s\subseteq\{y^j|1\leq j\leq n,j\neq i\}}} P_{\leq,\theta}(y^i)^{k-1} \\
        =& \frac{\binom{n-1}{k-1}}{\binom{n-1}{k-1}}P_{\leq,\theta}(y^i)^{k-1}=P_{\leq,\theta}(y^i)^{k-1}
    \end{aligned}
\end{equation}
Multiplying $R(x,y^i)\nabla_\theta\log\pi_\theta(y^i|x)$ to both sides,
\begin{equation}
    \begin{aligned}
        &R(x,y^i)P_{\leq,\theta}(y^i)^{k-1}\nabla_\theta\log\pi_\theta(y^i|x) \\
        =&R(x,y^i)\mathbb{E}[\frac{\binom{i-1}{k-1}}{\binom{n-1}{k-1}}|y^i]  \nabla_\theta\log\pi_\theta(y^i|x) \\
        =&\mathbb{E}[R(x,y^i)\frac{\binom{i-1}{k-1}}{\binom{n-1}{k-1}}\nabla_\theta\log\pi_\theta(y^i|x)|y^i]   
    \end{aligned}
\end{equation}
Taking expectation to both sides yield:
\begin{equation}
    \begin{aligned}
        &\mathbb{E}[R(x,y)P_{\leq,\theta}(y)^{k-1}\nabla_\theta\log\pi_\theta(y|x)] \\
                =&\mathbb{E}[R(x,y^i)P_{\leq,\theta}(y^i)^{k-1}\nabla_\theta\log\pi_\theta(y^i|x)] \\
        =&\mathbb{E}[\mathbb{E}[R(x,y^i)\frac{\binom{i-1}{k-1}}{\binom{n-1}{k-1}}\nabla_\theta\log\pi_\theta(y^i|x)|y^i] ] \\
        =&\mathbb{E}[R(x,y^i)\frac{\binom{i-1}{k-1}}{\binom{n-1}{k-1}}\nabla_\theta\log\pi_\theta(y^i|x)]
    \end{aligned}
\end{equation}
Because the above proof holds for every $i$,
\begin{equation}
    \begin{aligned}
    &\mathbb{E}[\frac{1}{n}\sum_{i=1}^nR(x,y^i)\frac{\binom{i-1}{k-1}}{\binom{n-1}{k-1}}\nabla_\theta\log\pi_\theta(y^i|x)] \\
    =&\frac{1}{n}\sum_{i=1}^n\mathbb{E}[R(x,y^i)\frac{\binom{i-1}{k-1}}{\binom{n-1}{k-1}}\nabla_\theta\log\pi_\theta(y^i|x)] \\
    =&\mathbb{E}[R(x,y)P_{\leq,\theta}(y)^{k-1}\nabla_\theta\log\pi_\theta(y|x)]
    \end{aligned}
\end{equation}

Similarly, we can proof
\begin{equation}
    \begin{aligned}
    &\mathbb{E}[\frac{1}{n-1}\sum_{j=1}^{i-1}R(x,y^j)\frac{\tbinom{j-1}{k-2}}{\tbinom{n-2}{k-2}}|y^i] \\
    =&\mathbb{E}[\frac{1}{n-1}[\sum_{j=1}^{i-1}R(x,y^j)\frac{\tbinom{j-1}{k-2}}{\tbinom{n-2}{k-2}}+\sum_{j=i+1}^{n}0]|y^i]  \\
    =&\mathbb{E}[\sum_{y'}\pi_\theta(y'|x)R(x,y')P_{\leq,\theta}(y')^{k-2}\mathbf{1}(R(x,y')<R(x,y^i))|y^i]\\
    =&\mathbb{E}[\sum_{y'< y^i}R(x,y')\pi_\theta(y'|x)P_{\leq,\theta}(y')^{k-2}|y^i]
    =\mathbb{E}[g(y^i)|y^i]
    \end{aligned}
\end{equation}
in which $\mathbb{E}[\frac{\binom{j-1}{k-2}}{\binom{n-2}{k-2}}|y^i,y^j]=\mathbb{E}[P_{\leq,\theta}(y^j)^{k-2}|y^i]$.

As a result,
\begin{equation}
    \begin{aligned}
    &\mathbb{E}[\frac{1}{n}\sum_{i=1}^n\frac{k-1}{n-1}\sum_{j=1}^{i-1}R(x,y^j)\frac{\tbinom{j-1}{k-2}}{\tbinom{n-2}{k-2}}
    \nabla_\theta\log\pi_\theta(y^i|x)] \\
    =&\mathbb{E}[(k-1)g(y)\nabla_\theta\log\pi_\theta(y|x)]
    \end{aligned}
\end{equation}

Since both parts are unbiased, the theorem has been proofed.

\end{proof}

\newpage

\section{RSPO for Max@k without approximation} \label{app:maxk}
In Section~\ref{sec:maxk}, we approximated $P_{\leq,\theta}(y)$ and $P_{<,\theta}(y)$ equally for simplicity and clearance. This approximation is suitable for reward models which generate continues rewards. However, when reward models generate discrete rewards-for example, generative rewards models-such approximation may lead to significant misalignment.

In this section, we develop unbiased estimators for Max@k without any approximation. We start from the original Max@k object:
    \begin{equation} \nonumber
    \begin{aligned}
&\nabla_\theta\mathop{\mathbb{E}}\limits_{x\sim\mathcal{D},y\sim\pi_\theta(y|x)}R(x,y)\sum_{t=1}^k P_{<,\theta}(y)^{t-1} P_{\leq,\theta}(y)^{k-t} \\
=& \mathop{\mathbb{E}}\limits_{x\sim\mathcal{D}}\sum_{t=1}^k\sum_{y}R(x,y)\nabla_\theta[\pi_\theta(y|x)P_{<,\theta}(y)^{t-1} P_{\leq,\theta}(y)^{k-t}] \\
=&
\mathop{\mathbb{E}}\limits_{x\sim\mathcal{D}}\sum_{t=1}^k\sum_{y}R(x,y)P_{<,\theta}(y)^{t-1} P_{\leq,\theta}(y)^{k-t}\nabla_\theta\pi_\theta(y|x) \\
+&\mathop{\mathbb{E}}\limits_{x\sim\mathcal{D}}\sum_{t=1}^k\sum_{y}R(x,y)\pi_\theta(y|x)(t-1)P_{<,\theta}(y)^{t-2} P_{\leq,\theta}(y)^{k-t}\nabla_\theta P_{<,\theta}(y) \\
+&\mathop{\mathbb{E}}\limits_{x\sim\mathcal{D}}\sum_{t=1}^k\sum_{y}R(x,y)\pi_\theta(y|x)P_{<,\theta}(y)^{t-1} (k-t)P_{\leq,\theta}(y)^{k-t-1}\nabla_\theta P_{\leq,\theta}(y) \\
&\text{gather terms related to }\nabla_\theta\pi_\theta(y|x) \text{ together}:\\
=&
\mathop{\mathbb{E}}\limits_{x\sim\mathcal{D}}\sum_{t=1}^k\sum_{y}[R(x,y)P_{<,\theta}(y)^{t-1} P_{\leq,\theta}(y)^{k-t} \\
&+\sum_{y'> y}R(x,y')\pi_\theta(y'|x)(t-1)P_{<,\theta}(y')^{t-2} P_{\leq,\theta}(y')^{k-t} \\
&+\sum_{y'\geq y}R(x,y')\pi_\theta(y'|x)P_{<,\theta}(y')^{t-1} (k-t)P_{\leq,\theta}(y')^{k-t-1}]\nabla_\theta\pi_\theta(y|x) \\
&\text{because }\mathbb{E}_{y\sim\pi_\theta} [c \nabla_\theta \log \pi_\theta(y|x)] = \nabla_\theta\text{ c = 0 for any constant c:}\\
=&
\mathop{\mathbb{E}}\limits_{x\sim\mathcal{D}}\sum_{t=1}^k\sum_{y}[R(x,y)P_{<,\theta}(y)^{t-1} P_{\leq,\theta}(y)^{k-t} \\
&-\sum_{y'\leq y}R(x,y')\pi_\theta(y'|x)(t-1)P_{<,\theta}(y')^{t-2} P_{\leq,\theta}(y')^{k-t} \\
&-\sum_{y'< y}R(x,y')\pi_\theta(y'|x)P_{<,\theta}(y')^{t-1} (k-t)P_{\leq,\theta}(y')^{k-t-1}]\nabla_\theta\pi_\theta(y|x) \\
&\text{rearrange terms to remove }t-1,k-t \text{ for }y'<y:\\
=&
\mathop{\mathbb{E}}\limits_{x\sim\mathcal{D}}\sum_{y}[R(x,y)\sum_{t=1}^k P_{<,\theta}(y)^{t-1} P_{\leq,\theta}(y)^{k-t} \\
&-\sum_{y'=y}R(x,y')\pi_\theta(y'|x)\sum_{t=1}^{k-1}t P_{<,\theta}(y')^{t-1} P_{\leq,\theta}(y')^{k-t-1} \\
&-\sum_{y'< y}R(x,y')\pi_\theta(y'|x)\sum_{t=1}^{k-1}k P_{<,\theta}(y')^{t-1}P_{\leq,\theta}(y')^{k-t-1}]\nabla_\theta\pi_\theta(y|x) \\
    \end{aligned}
    \end{equation}

Note that $y'=y$ indicates $R(x,y')=R(x,y)$ according to our definition. In other words, the second term iterate all the responses with the equivalent reward as $y$.

\newpage

Similar as Section~\ref{sec:maxk}, we will use $n-1$ responses excluding $y$ to estimate $P_{<,\theta}(y)^{t-1} P_{\leq,\theta}(y)^{k-t}$ and $n-2$ responses excluding $y,y'$ to estimate $P_{<,\theta}(y')^{t-1}P_{\leq,\theta}(y')^{k-t-1}$. We show that
\begin{lemma}
    Given $n_0$ sampled responses, in which $c_<$ responses have less rewards than $y_0$ and $c_=$ responses have equivalent rewards as $y_0$, an unbiased estimator of $P_{<,\theta}(y_0)^a P_{\leq,\theta}(y_0)^b$ is $\frac{\tbinom{c_<}{a}\tbinom{c_<+c_=-a}{b}}{\tbinom{n_0}{a+b}\tbinom{a+b}{a}}$ when $n_0\geq a+b$.
\end{lemma}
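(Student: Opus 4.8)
The plan is to express $P_{<,\theta}(y_0)^a P_{\leq,\theta}(y_0)^b$ as a U-statistic and then compute the probability that an appropriately structured random subset of the $n_0$ samples ``certifies'' this product. First I would fix $y_0$ and partition the $n_0$ sampled responses into three groups according to their reward relative to $R(x,y_0)$: the $c_<$ responses with strictly smaller reward, the $c_=$ responses with equal reward, and the remaining $n_0 - c_< - c_=$ responses with strictly larger reward. Note that $P_{<,\theta}(y_0)$ is the probability a fresh draw lands in the first group, and $P_{\leq,\theta}(y_0)$ is the probability it lands in the first or second group.

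Next I would observe that for any ordered tuple of $a+b$ distinct indices drawn from the $n_0$ samples, the event ``the first $a$ of them have reward $< R(x,y_0)$ and the remaining $b$ have reward $\leq R(x,y_0)$'' has probability exactly $P_{<,\theta}(y_0)^a P_{\leq,\theta}(y_0)^b$, by independence of the samples (and by the same ``$I(s)$'' indicator-expectation argument used in the proof of Theorem~\ref{theorem:passk} and Theorem~\ref{theorem:maxk}). Summing this indicator over all ordered $(a+b)$-tuples and dividing by the number of such tuples gives an unbiased estimator. Equivalently, one may average over unordered $(a+b)$-subsets together with a choice of which $a$ of them play the ``$<$'' role: there are $\binom{n_0}{a+b}\binom{a+b}{a}$ such configurations in total, and I need to count how many are ``successful.''

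The counting step is where I would spend the care. A configuration is successful iff the $a$ indices assigned the ``$<$'' role all fall in the $c_<$ group, and the $b$ indices assigned the ``$\leq$'' role all fall in the $c_< + c_=$ block \emph{minus} the $a$ already chosen; hence there are $\binom{c_<}{a}$ ways to pick the strict indices and $\binom{c_< + c_= - a}{b}$ ways to pick the remaining indices, for a total of $\binom{c_<}{a}\binom{c_< + c_= - a}{b}$ successful configurations. Dividing by $\binom{n_0}{a+b}\binom{a+b}{a}$ yields the claimed estimator, and taking expectation over the sampling recovers $P_{<,\theta}(y_0)^a P_{\leq,\theta}(y_0)^b$ by linearity (each configuration contributes the same expectation). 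The condition $n_0 \geq a+b$ is exactly what is needed for $\binom{n_0}{a+b}$ to be nonzero so the estimator is well-defined.

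The main obstacle I anticipate is purely bookkeeping: making sure the ``role assignment'' count is not over- or under-counted — in particular that the $b$ ``$\leq$'' slots are chosen from $c_< + c_= - a$ remaining eligible samples rather than from $c_< + c_=$ or $c_=$, and that the normalization $\binom{n_0}{a+b}\binom{a+b}{a}$ matches the way success is counted (unordered subset times a labeling of which $a$ are strict). Once the combinatorial identity between ``sum of indicators over labeled subsets'' and $\binom{c_<}{a}\binom{c_<+c_=-a}{b}$ is pinned down, unbiasedness is immediate from the independence of the $n_0$ draws, exactly paralleling the earlier proofs in Appendix~\ref{app:proof_passk}.
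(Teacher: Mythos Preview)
Your proposal is correct and follows precisely the U-statistic/indicator-expectation pattern that the paper employs for Theorems~\ref{theorem:passk} and~\ref{theorem:maxk} (the lemma itself is stated in the appendix without an explicit proof, so your argument is exactly the intended one). The key counting step---selecting the $a$ strict-inequality indices from the $c_<$ group and then the $b$ weak-inequality indices from the remaining $c_< + c_= - a$ eligible samples, normalized by $\binom{n_0}{a+b}\binom{a+b}{a}$---is correctly identified, and unbiasedness follows by linearity and independence exactly as you describe.
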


Using this lemma, we are able to compute an unbiased estimator for the Max@k object by Algorithm~\ref{alg:unbiased-grad}.

\begin{algorithm}[ht]
\caption{An Unbiased Estimator for Max@k}\label{alg:unbiased-grad}
\begin{algorithmic}[1]
\Require prompt $x$, number of samples $n$, Max@k size $k$, policy $\pi_\theta$, reward function $R$
\State Sample $n$ responses from the policy:
\[
  \{y_1,\dots,y_n\}\;\overset{\mathrm{iid}}{\sim}\;\pi_\theta(\cdot\mid x)
\]
\State initialize gradient estimator $\hat g \leftarrow 0$
\For{$i = 1,\dots,n$}                     \Comment{Loop over main sample $y_i$}
  \State $y \gets y_i,\quad r \gets R(x,y)$
  \State let
  \[
    c_< \gets \bigl|\{j\neq i: R(x,y_j)<r\}\bigr|,\quad
    c_= \gets \bigl|\{j\neq i: R(x,y_j)=r\}\bigr|
  \]
  \For{$t = 1,\dots,k$}                  \Comment{First term estimation}
    \State 
    \[
      \hat E^{(1)}_{i,t}
      \;=\;
      \frac{\displaystyle\binom{c_<}{t-1}\,\binom{c_<+c_=- (t-1)}{k-t}}
           {\displaystyle\binom{n-1}{k-1}\,\binom{k-1}{t-1}}
    \]
    \State
    \(\hat g \;+\!=\; r\;\hat E^{(1)}_{i,t}\;\nabla_\theta\log\pi_\theta(y\mid x)\)
  \EndFor
  \For{each $j\neq i$}                   \Comment{Second and third term estimation}
    \State $y'\gets y_j,\quad r' \gets R(x,y')$
    \State let
    \[
      c'_< \gets \bigl|\{\ell\neq i,j : R(x,y_\ell)<r'\}\bigr|,\quad
      c'_=\gets \bigl|\{\ell\neq i,j : R(x,y_\ell)=r'\}\bigr|
    \]
    \For{$t = 1,\dots,k-1$}
        \[
        \hat E^{(2,3)}_{ij,t}
        =\;
        \frac{\displaystyle\binom{c'_<}{t-1}\,\binom{c'_<+c'_=- (t-1)}{k-t-1}}
             {\displaystyle\binom{n-2}{k-2}\,\binom{k-2}{t-1}}
        \]
      \If{$r'=r$}                        \Comment{Second term}
      \State
      \(\hat g \;-\!=\;
         r'\,\pi_\theta(y'\mid x)\,
         \frac{t}{n-1}\hat E^{(2,3)}_{ij,t}\,
         \nabla_\theta\log\pi_\theta(y\mid x)\)
      \ElsIf{$r'<r$}                    \Comment{Third term}
      \State
      \(\hat g \;-\!=\;
         r'\,\pi_\theta(y'\mid x)\,
         \frac{k}{n-1}\hat E^{(2,3)}_{ij,t}\,
         \nabla_\theta\log\pi_\theta(y\mid x)\)
      \EndIf
    \EndFor
  \EndFor
\EndFor
\State \Return \(\hat g / n\)
\end{algorithmic}
\end{algorithm}

\newpage

We then derive a closed-form unbiased estimator without summing over $t$, which simplifies computation and enables comparisons with other estimators.

First, we show that

\begin{lemma} \label{lemma:2}
    \begin{equation}
        \sum_{0\leq a\leq m,b=m-a} \frac{\tbinom{c_<}{a}\tbinom{c_<+c_=-a}{b}}{\tbinom{a+b}{a}}=\frac{m+1}{c_=+1}\Large[\tbinom{c_<+c_=+1}{m+1}-\tbinom{c_<}{m+1}\Large]
    \end{equation}
\end{lemma}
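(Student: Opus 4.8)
Write $p := c_<$ and $q := c_=$, and note that in the stated sum $a+b=m$, so the denominator is $\binom{a+b}{a}=\binom{m}{a}$. The plan is to eliminate the reciprocal binomial coefficient by the Beta–integral identity $1/\binom{m}{a} = (m+1)\int_0^1 t^{a}(1-t)^{m-a}\,dt$, collapse the remaining weighted Vandermonde-type sum to a closed form by a generating-function manipulation, and then finish with a single application of Vandermonde's convolution. Applying the identity to the left-hand side of Lemma~\ref{lemma:2} and interchanging the finite sum with the integral turns it into $(m+1)\int_0^1 S(t)\,dt$, where $S(t) = \sum_{a=0}^{m}\binom{p}{a}\binom{p+q-a}{m-a}\,t^{a}(1-t)^{m-a}$.

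\textbf{Evaluating $S(t)$ (the key step).} Write $\binom{p+q-a}{m-a} = [u^{m}]\,u^{a}(1+u)^{p+q-a}$, pull $[u^m]$ outside the sum over $a$, and collect the $a$-dependent factors; the sum over $a$ then becomes a plain binomial expansion, giving $S(t) = [u^{m}]\,(1-t)^{m-p}(1+u)^{q}\bigl((1-t)(1+u)+tu\bigr)^{p}$. The cubic-looking bracket collapses, since $(1-t)(1+u)+tu = 1-t+u$, so $S(t) = [u^{m}](1-t)^{m-p}(1+u)^{q}(1-t+u)^{p}$; expanding $(1+u)^q$ and $(1-t+u)^p$, extracting the coefficient (terms with $\binom{p}{m-j}=0$ drop out and the spurious powers of $1-t$ cancel) yields the clean form $S(t) = \sum_{j=0}^{m}\binom{q}{j}\binom{p}{m-j}(1-t)^{j}$.

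\textbf{Integrating and simplifying.} Since $\int_0^1 (1-t)^{j}\,dt = 1/(j+1)$, the left-hand side equals $(m+1)\sum_{j=0}^{m}\frac{1}{j+1}\binom{q}{j}\binom{p}{m-j}$. Using the absorption identity $\frac{1}{j+1}\binom{q}{j} = \frac{1}{q+1}\binom{q+1}{j+1}$ and re-indexing $i=j+1$, this becomes $\frac{m+1}{q+1}\sum_{i=1}^{m+1}\binom{q+1}{i}\binom{p}{m+1-i}$. By Vandermonde's convolution $\sum_{i=0}^{m+1}\binom{q+1}{i}\binom{p}{m+1-i} = \binom{p+q+1}{m+1}$, and the $i=0$ term equals $\binom{p}{m+1}$, so the sum is $\binom{p+q+1}{m+1}-\binom{p}{m+1}$. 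This gives exactly $\frac{m+1}{q+1}\bigl[\binom{p+q+1}{m+1}-\binom{p}{m+1}\bigr]$, proving the lemma.

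\textbf{Expected obstacle and an alternative.} The main difficulty is the middle step: identifying $S(t)$ as a single coefficient extraction and spotting the algebraic simplification $(1-t)(1+u)+tu = 1-t+u$ that makes the $p$-th power collapse; without it the generating-function bookkeeping stalls. A purely combinatorial route avoiding integrals is also available: apply the subset-of-a-subset identity $\binom{p}{a}\binom{p-a}{m-a} = \binom{p}{m}\binom{m}{a}$ twice to rewrite the summand as $\bigl[\binom{p}{m}/\binom{p+q-m}{q}\bigr]\binom{p+q-a}{q}$ and sum over $a$ with the hockey-stick identity. This is shorter but requires $p\ge m$ (to legitimize the division by $\binom{p-a}{m-a}$) together with a polynomiality-in-$p$ argument to cover the degenerate ranges, whereas the Beta–integral derivation holds verbatim for all nonnegative integers $p,q,m$ as stated.
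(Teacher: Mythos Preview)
Your proof is correct. The Beta–integral identity removes the reciprocal $\binom{m}{a}$ cleanly; the generating-function step is valid (the potentially negative power $(1-t)^{m-p}$ that appears when $p>m$ cancels termwise against the $(1-t)^{p-k}$ coming from $(1-t+u)^p$, leaving $S(t)=\sum_j\binom{q}{j}\binom{p}{m-j}(1-t)^j$); and the absorption $\tfrac{1}{j+1}\binom{q}{j}=\tfrac{1}{q+1}\binom{q+1}{j+1}$ together with Vandermonde finishes the identity exactly as stated.

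The paper does not supply a proof of Lemma~\ref{lemma:2}; it is stated and used without argument. So there is nothing in the paper to compare your derivation against. Your main route works uniformly for all nonnegative integers $c_<,c_=,m$, and the hockey-stick alternative you outline (via $\binom{p}{a}\binom{p-a}{m-a}=\binom{p}{m}\binom{m}{a}$, reducing the summand to $\tfrac{\binom{p}{m}}{\binom{p+q-m}{q}}\binom{p+q-a}{q}$) is also valid and yields the same closed form; your caveat that it literally requires $p\ge m$ and then a polynomial-in-$p$ continuation for the remaining range is accurate.
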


Using this lemma, we can derive an unbiased estimator 
\begin{equation}
    \frac{1}{\tbinom{n-2}{k-2}}\frac{k-1}{c'_=+1}[\tbinom{c'_<+c'_=+1}{k-1}-\tbinom{c'_<}{k-1}]
\end{equation}
for $\sum_{t=1}^{k-1} P_{<,\theta}(y')^{t-1}P_{\leq,\theta}(y')^{k-t-1}$. The $c'_<$ and $c'_=$ are the number of responses with rewards less than or equal to $y'$ among the $n-2$ responses, respectively.

Moreover, as the rest of $c'_=$ responses have the equivalent reward as $y'$, they can form a group of size $c'_=+1$, in which each of them has the same weight. As a result, the total contribution of this group to the weight of $\nabla_\theta\pi_\theta(y|x)$ is 
\begin{equation}
\begin{aligned}
        &-\frac{k}{n-1}\frac{c'_=+1}{\tbinom{n-2}{k-2}}\frac{k-1}{c'_=+1}[\tbinom{c'_<+c'_=+1}{k-1}-\tbinom{c'_<}{k-1}]R(x,y')    \\
        =&-\frac{1}{\tbinom{n-2}{k-2}}\frac{k(k-1)}{n-1}[\tbinom{c'_<+c'_=+1}{k-1}-\tbinom{c'_<}{k-1}]R(x,y')   \\
        =&-\frac{1}{\tbinom{n-2}{k-2}}\frac{k(k-1)}{n-1}[\tbinom{c'_<}{k-2}+\tbinom{c'_<+1}{k-2}+...+\tbinom{c'_<+c'_=}{k-2}]R(x,y')   
\end{aligned}
\end{equation}
the estimator for the third term ($y'<y$) is the sum of every group. The last step ensures when this group of responses are sorted into non-decreasing order from index $c'_<+1$ to index $c'_<+c'_=+1$, the formula can be expressed using their index $-1$.

Next, we compute the estimator for the first ($y$) and second term ($y'=y$). Using Lemma~\ref{lemma:2}, we can also compute the contribution of first term to the weight of $\nabla_\theta\pi_\theta(y|x)$:
\begin{equation}
    \frac{1}{\tbinom{n-1}{k-1}}\frac{k}{c_=+1}[\tbinom{c_<+c_=+1}{k}-\tbinom{c_<}{k}]R(x,y)
\end{equation}
for $\sum_{t=1}^k P_{<,\theta}(y)^{t-1} P_{\leq,\theta}(y)^{k-t}$ in the first term. The $c_<$ and $c_=$ are the number of responses with rewards less than or equal to $y$ among the $n-1$ responses, respectively.

For the second term, we show that
\begin{lemma} \label{lemma:3}
    \begin{equation}
    \begin{aligned}
        &\sum_{0\leq a\leq m,b=m-a} (a+1)\frac{\tbinom{c_<}{a}\tbinom{c_<+c_=-a}{b}}{\tbinom{a+b}{a}}\\
        =&\frac{(m+1)(m+2)}{(c_=+1)(c_=+2)}\tbinom{c_<+c_=+2}{m+2}-\frac{(c_<+1)(c_<-m)}{c_=+1}\tbinom{c_<}{m}\\
        &+\frac{(c_<-m)(c_<-m-1)}{c_=+2}\tbinom{c_<}{m}
    \end{aligned}
    \end{equation}
\end{lemma}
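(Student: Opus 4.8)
The plan is to derive Lemma~\ref{lemma:3} from two applications of Lemma~\ref{lemma:2}, by absorbing the weight $a+1$ into the binomial coefficients and then doing routine binomial algebra.

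First I would note that $b=m-a$ on the entire summation range, so $\binom{a+b}{a}=\binom{m}{a}$ and the left-hand side equals $S:=\sum_{a=0}^{m}(a+1)\frac{\binom{c_<}{a}\binom{c_<+c_=-a}{m-a}}{\binom{m}{a}}$. The key elementary identity is
\[
(a+1)\binom{c_<}{a}=(c_<+1)\binom{c_<}{a}-c_<\binom{c_<-1}{a},
\]
which follows from the absorption rule $(c_<-a)\binom{c_<}{a}=c_<\binom{c_<-1}{a}$. Substituting it splits $S=(c_<+1)\,S_1-c_<\,S_2$ with
\[
S_1=\sum_{a=0}^{m}\frac{\binom{c_<}{a}\binom{c_<+c_=-a}{m-a}}{\binom{m}{a}},\qquad
S_2=\sum_{a=0}^{m}\frac{\binom{c_<-1}{a}\binom{c_<+c_=-a}{m-a}}{\binom{m}{a}}.
\]

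Next I would recognize each sum as an instance of Lemma~\ref{lemma:2}: $S_1$ is its left-hand side with parameters $(c_<,c_=,m)$, while writing $\binom{c_<+c_=-a}{m-a}=\binom{(c_<-1)+(c_=+1)-a}{m-a}$ shows that $S_2$ is its left-hand side with parameters $(c_<-1,\,c_=+1,\,m)$. Lemma~\ref{lemma:2} then gives
\[
S_1=\tfrac{m+1}{c_=+1}\bigl[\tbinom{c_<+c_=+1}{m+1}-\tbinom{c_<}{m+1}\bigr],\qquad
S_2=\tfrac{m+1}{c_=+2}\bigl[\tbinom{c_<+c_=+1}{m+1}-\tbinom{c_<-1}{m+1}\bigr].
\]
(When $c_<=0$ the factor $c_<$ kills $S_2$ and $S$ reduces to its $a=0$ term $\tbinom{c_=}{m}$, which I would match against the right-hand side directly; for $c_<\ge1$ all binomials above are ordinary.) Substituting these into $(c_<+1)S_1-c_<S_2$, the coefficient of $\tbinom{c_<+c_=+1}{m+1}$ collects to $\tfrac{(m+1)(c_<+c_=+2)}{(c_=+1)(c_=+2)}$, and the committee–chair identity $(c_<+c_=+2)\tbinom{c_<+c_=+1}{m+1}=(m+2)\tbinom{c_<+c_=+2}{m+2}$ turns this into the leading term of the claimed right-hand side. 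The residual pieces $-(c_<+1)\tfrac{m+1}{c_=+1}\tbinom{c_<}{m+1}$ and $+c_<\tfrac{m+1}{c_=+2}\tbinom{c_<-1}{m+1}$ are rewritten as multiples of $\tbinom{c_<}{m}$ via the absorption identities $(m+1)\tbinom{c_<}{m+1}=(c_<-m)\tbinom{c_<}{m}$, $(m+1)\tbinom{c_<-1}{m+1}=(c_<-1-m)\tbinom{c_<-1}{m}$, and $c_<\tbinom{c_<-1}{m}=(c_<-m)\tbinom{c_<}{m}$, reproducing exactly $-\tfrac{(c_<+1)(c_<-m)}{c_=+1}\tbinom{c_<}{m}$ and $+\tfrac{(c_<-m)(c_<-m-1)}{c_=+2}\tbinom{c_<}{m}$.

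The step I expect to be the main obstacle is this last one: keeping the bookkeeping of the four binomial contributions straight, in particular placing the two fractional coefficients over the common denominator $(c_=+1)(c_=+2)$ and checking that the numerators simplify exactly as claimed. There is no conceptual difficulty beyond the $(a+1)$-absorption trick; once that is in place the identity essentially splits term by term, and as a safeguard the resulting polynomial identity in $c_<$ (with $c_=$ and $m$ fixed) can be verified at finitely many integer points.
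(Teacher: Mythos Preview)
Your proposal is correct. The paper states Lemma~\ref{lemma:3} without proof, so there is nothing to compare against; your reduction to two instances of Lemma~\ref{lemma:2} via the absorption identity $(a+1)\binom{c_<}{a}=(c_<+1)\binom{c_<}{a}-c_<\binom{c_<-1}{a}$, followed by the committee--chair and absorption simplifications, cleanly yields the three claimed terms.
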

Using this lemma, the contribution of second term to the weight of $\nabla_\theta\pi_\theta(y|x)$ is 
    \begin{equation}
    \begin{aligned}
    -\frac{c_=}{(n-1)\tbinom{n-2}{k-2}}[
        &\frac{(m+1)(m+2)}{(c_=+1)(c_=+2)}\tbinom{c_<+c_=+2}{m+2}-\frac{(c_<+1)(c_<-m)}{c_=+1}\tbinom{c_<}{m}\\
        &+\frac{(c_<-m)(c_<-m-1)}{c_=+2}\tbinom{c_<}{m}]R(x,y)
    \end{aligned}
    \end{equation}

As the first term and the second term all involves $R(x,y)$, we sum up their contribution together and obtain:
    \begin{equation}
    \begin{aligned}
    \frac{1}{\tbinom{n-1}{k-1}}\frac{k}{c_=+1}&[\tbinom{c_<+c_=+1}{k}-\tbinom{c_<}{k}]R(x,y)\\
    -\frac{c_=}{(n-1)\tbinom{n-2}{k-2}}&[
        \frac{(m+1)(m+2)}{(c_=+1)(c_=+2)}\tbinom{c_<+c_=+2}{m+2}-\frac{(c_<+1)(c_<-m)}{c_=+1}\tbinom{c_<}{m}\\
        &+\frac{(c_<-m)(c_<-m-1)}{c_=+2}\tbinom{c_<}{m}]R(x,y)\\
        =k\frac{\tbinom{c_<}{k-1}}{\tbinom{n-1}{k-1}}R(x,y)&&
    \end{aligned}
    \end{equation}    
Surprisingly, though the contribution of $R(x,y)$ is complex at the beginning, its final expression is very straightforward. 

Putting together all three terms, we have
\begin{theorem} \label{theorem_maxkreal}
When $n\ge k$, an unbiased estimator for Max@k is, 
\begin{equation}
\begin{aligned}
J_\text{max@k}^\text{RSPO}(\theta)=&    \frac{k}{n|\mathcal{D}_b|}\sum_{x\sim \mathcal{D}_b} \sum_{i=1}^n 
    [R(x,y^i)\frac{\tbinom{c_<[i]}{k-1}}{\tbinom{n-1}{k-1}}-\\
&\frac{k-1}{n-1}\sum_{j=1}^{c_<[i]}R(x,y^j)\frac{\tbinom{j-1}{k-2}}{\tbinom{n-2}{k-2}}]
    \nabla_\theta\log\pi_\theta(y^i|x)   
\end{aligned}
\end{equation}
where we sort $\{y^i\}_{i=1}^n$ by $R(x,y_1)\leq R(x,y_2)\leq ...\leq R(x,y_n)$ and set $c_<[i]=|\{j| R(x,y^j)<R(x,y^i)\}\bigr|$.
\end{theorem}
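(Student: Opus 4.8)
The plan is to formalize and assemble the derivation sketched in this section, following the same leave-out (jackknife-style) construction used in the proof of Theorem~\ref{theorem:passk}. The starting point is the final rearranged form of the exact gradient $\nabla_\theta\mathbb{E}_{x,y}R(x,y)\sum_{t=1}^kP_{<,\theta}(y)^{t-1}P_{\leq,\theta}(y)^{k-t}$ obtained above, namely $\mathbb{E}_x\sum_y W(y)\,\nabla_\theta\pi_\theta(y\mid x)$ in which $W(y)$ splits into a \emph{first term} carrying $R(x,y)$ together with the monomial sum $\sum_{t=1}^k P_{<,\theta}(y)^{t-1}P_{\leq,\theta}(y)^{k-t}$, a \emph{second term} summing over responses $y'$ tied with $y$ in reward (weight $\sum_{t=1}^{k-1}t\,P_{<,\theta}(y')^{t-1}P_{\leq,\theta}(y')^{k-t-1}$), and a \emph{third term} summing over $y'<y$ (weight $k\sum_{t=1}^{k-1}P_{<,\theta}(y')^{t-1}P_{\leq,\theta}(y')^{k-t-1}$). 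Writing $\nabla_\theta\pi_\theta(y\mid x)=\pi_\theta(y\mid x)\nabla_\theta\log\pi_\theta(y\mid x)$ turns this into an expectation over a single draw $y\sim\pi_\theta$, so it suffices to produce an unbiased estimator of $W(y)$ from samples independent of $y$; a Monte Carlo average over $i=1,\dots,n$ and over the batch then yields $J_\text{max@k}^\text{RSPO}$.

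For the estimator of $W(y)$ I would use the leave-out device. Each monomial $P_{<,\theta}(y)^aP_{\leq,\theta}(y)^b$ in the first term is estimated from the $n-1$ samples other than $y^i$ by invoking the first Lemma above, which gives the unbiased estimator $\binom{c_<}{a}\binom{c_<+c_=-a}{b}/\bigl(\binom{n-1}{a+b}\binom{a+b}{a}\bigr)$. The monomials inside the second and third terms carry an extra factor $\pi_\theta(y'\mid x)$ and a reward constraint on $y'$, so I represent $y'$ by iterating over each other sample $y^j$ ($j\ne i$), weighted by $1/(n-1)$ and restricted by the appropriate reward comparison, and estimate the leftover monomial in $y'$ from the remaining $n-2$ samples, again by the first Lemma. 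Independence of $y^i$ (and $y^j$) from the weight-estimator samples gives $\mathbb{E}[XY]=\mathbb{E}[X]\,\mathbb{E}[Y]$, exactly as in Theorem~\ref{theorem:passk}, so every piece is unbiased; the three auxiliary convolution identities (the first Lemma, Lemma~\ref{lemma:2}, and Lemma~\ref{lemma:3}) I take as given.

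Next I would collapse the sums over $t$: the first term's monomial sum by Lemma~\ref{lemma:2} with $m=k-1$, the second term's $t$-weighted sum by Lemma~\ref{lemma:3} with $m=k-2$, and the third term's monomial sum by Lemma~\ref{lemma:2} with $m=k-2$. For the third term I additionally group the $c'_=+1$ responses that tie with $y'$ in reward into a single block, every member of which carries the identical weight, so that the Hockey-Stick identity telescopes the block's total contribution to $\nabla_\theta\pi_\theta(y\mid x)$ into $-\frac{k(k-1)}{(n-1)\binom{n-2}{k-2}}R(x,y')\binom{j-1}{k-2}$ with $j-1$ the sorted position of $y'$; summing over blocks reproduces exactly the $\frac{k-1}{n-1}\sum_{j=1}^{c_<[i]}R(x,y^j)\binom{j-1}{k-2}/\binom{n-2}{k-2}$ piece of the theorem.

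The step I expect to be the main obstacle is showing that the first and second terms --- both carrying the factor $R(x,y)$ --- combine and collapse to the conspicuously clean $k\binom{c_<}{k-1}/\binom{n-1}{k-1}\cdot R(x,y)$. After substituting the Lemma~\ref{lemma:2} and Lemma~\ref{lemma:3} closed forms this becomes a purely combinatorial identity in $c_<,c_=,n,k$: one clears denominators using $\binom{n-1}{k-1}=\frac{n-1}{k-1}\binom{n-2}{k-2}$ and then applies Pascal's rule repeatedly to check that everything except $k\binom{c_<}{k-1}/\binom{n-1}{k-1}$ cancels --- a calculation in which an off-by-one shift of a binomial index is easy to commit, so I would isolate it as a standalone lemma and verify it by induction. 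Once that holds, reassembling the three contributions into a single sum over the sorted indices $i$, and observing that the leave-one-out count $c_<$ attached to $y^i$ equals $c_<[i]=|\{j:R(x,y^j)<R(x,y^i)\}|$ with ties treated uniformly, completes the proof; the hypothesis $n\ge k$ is precisely what keeps every binomial invoked along the way well defined.
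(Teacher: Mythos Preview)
Your proposal is correct and follows essentially the same route as the paper: the leave-out construction, the use of the first Lemma for the monomial estimators, Lemma~\ref{lemma:2} and Lemma~\ref{lemma:3} to collapse the $t$-sums, the Hockey-Stick telescoping of tied blocks in the third term, and the identification of the first-plus-second-term cancellation to $k\binom{c_<}{k-1}/\binom{n-1}{k-1}\,R(x,y)$ as the crux --- all match the paper's argument. The only minor imprecision is phrasing the Hockey-Stick step as giving the ``block's total contribution'' as a single $\binom{j-1}{k-2}$ term; in fact the block's total is the sum $\binom{c'_<}{k-2}+\cdots+\binom{c'_<+c'_=}{k-2}$, which then re-indexes as one $\binom{j-1}{k-2}$ term per sorted member, but your downstream assembly makes clear you mean exactly this.
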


After combinatorial operations, Theorem~\ref{theorem_maxkreal} proposes a simple unbiased estimator for the Max@k object, which greatly simplifies the complex computation in Algorithm~\ref{alg:unbiased-grad}. 

Moreover, compared with the Max@k estimator in Section~\ref{sec:maxk} with approximations, the only difference is replacing $i-1$ by $c_<[i]$. This again verify that the estimator in Section~\ref{sec:maxk} is almost correct in the continues reward scenarios.

In addition, when rewards are binary, only $y^i$ with $R(x,y^i)=1$ has non-zero gradient weight, and the $c_<[i]$ is the number of responses with $R(x,y)=0$. This indicates that this Max@k estimator is exactly the Pass@k estimator of Theorem~\ref{theorem:passk}.

\newpage

\end{document}